\def\eqref#1{equation~\ref{#1}}
\def\1{\bm{1}}
\DeclareMathAlphabet{\mathsfit}{\encodingdefault}{\sfdefault}{m}{sl}
\SetMathAlphabet{\mathsfit}{bold}{\encodingdefault}{\sfdefault}{bx}{n}
\newcommand{\R}{\mathbb{R}}
\newcommand{\KL}{\mathrm{KL}}
\DeclareMathOperator*{\argmin}{arg\,min}
\newtheorem{theorem}{Theorem}
\newtheorem{lemma}[theorem]{Lemma}
\newtheorem{corollary}[theorem]{Corollary}
\newtheorem{definition}[theorem]{Definition}
\DeclareMathOperator{\indicate}{1 \kern -4pt 1}
\DeclareMathOperator*{\diam}{diam}
\DeclareMathOperator*{\conv}{conv}
\DeclareMathOperator*{\TSNE}{t-SNE}
\DeclareMathOperator*{\IMTSNE}{\mathsf{Im}({\TSNE}_{\rho,n})}
\title{t-SNE exaggerates clusters, provably}
\author{Noah Bergam\textsuperscript{\textdagger}, Szymon Snoeck\textsuperscript{*}, \& Nakul Verma\textsuperscript{\textdagger} \\
\textsuperscript{\textdagger}Computer Science Department; \textsuperscript{*}Applied Mathematics Department\\
Columbia University \\
\texttt{\{noah.bergam,sgs2179\}@columbia.edu, verma@cs.columbia.edu} \\
}
\begin{document}

\maketitle

\begin{abstract}




Central to the widespread use of t-distributed stochastic neighbor embedding \mbox{(t-SNE)} is the conviction that it produces visualizations whose structure roughly matches that of the input. To the contrary, we prove that (1) the strength of the input clustering, and (2) the extremity of outlier points, \emph{cannot} always be inferred from the t-SNE output. We demonstrate the prevalence of these failure modes in practice as well.














\end{abstract}

\section{Introduction}

t-SNE \citep{van2008visualizing} and related data visualization methods have become staples in modern exploratory data analysis. They just seem to work: practitioners find that these techniques reliably tease out interesting cluster structures in datasets. Consequently they are now used ubiquitously in a wide array of fields, ranging from single-cell genomics to language model interpretability \citep{kobak2019art, petukhova2025text}. The practical success of these techniques has naturally piqued interest in the theoretical community as well. 

Existing analysis of t-SNE has established that, given high-dimensional data with spherical, well-separated cluster structure, t-SNE outputs a visualization which preserves that cluster structure  \citep{arora2018analysis,linderman2019clustering}. In other words, t-SNE is provably good at generating \textit{true positives} in its visualization of clusters.

Curiously, a theoretical investigation of t-SNE's potential to generate \textit{false positives} (clustered plots despite un-clustered input) and \textit{false negatives} (un-clustered plots despite clustered input) has remained elusive. One should note that this is not a purely academic curiosity, since the interpretation of t-SNE outputs have important consequences downstream in the sciences, influencing hypothesis generation, experimental design, and scientific conclusions. 

As an illustration of t-SNE's potential to mislead, consider the visualizations produced by three example inputs in Figure~\ref{fig:tsne_demo}. {In the left column, a t-SNE plot depicts two salient clusters, despite the pairwise distances in the input being nearly uniform.} In the middle column, a t-SNE plot depicts an unstructured blob, whereas its corresponding input is easily partitioned into two salient clusters (with the exception of a single adversarially placed point). In the right column, a t-SNE plot shows two well-separated clusters, completely misrepresenting the outlying third of the input dataset. 

\begin{figure}[t!]


\centering\includegraphics[width=\textwidth]{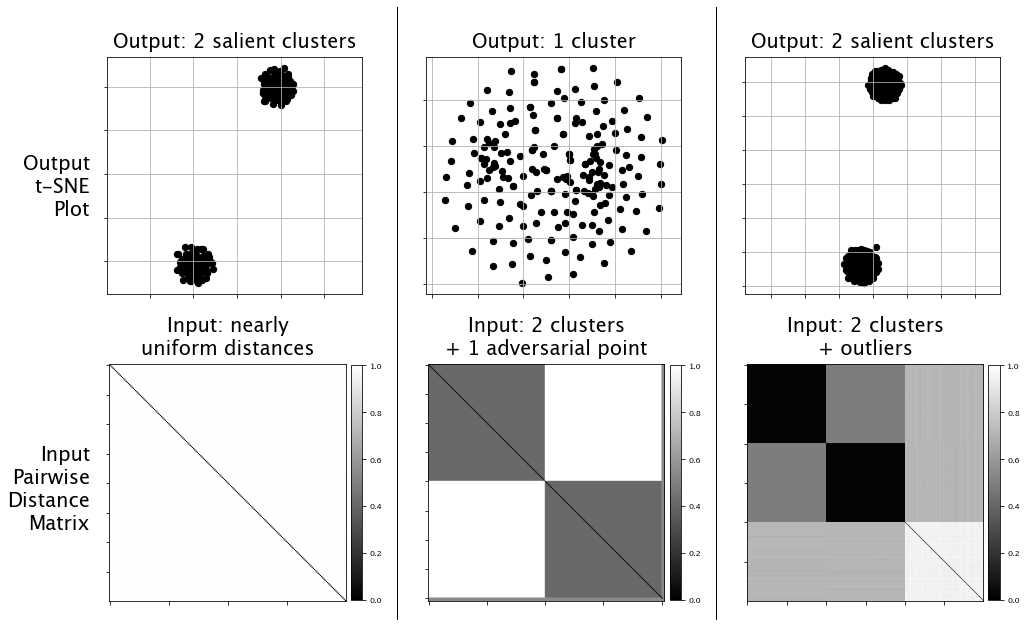}

\caption{Some examples of how t-SNE may misrepresent cluster information in the input dataset.}
  \label{fig:tsne_demo}
\end{figure}

Our work formalizes these and other limitations of t-SNE in terms of faithfully depicting distance-based information in the input. Our theoretical analysis, suffused with experiments, shows that one should take 
t-SNE visualizations with a grain of salt. Our contributions are as follows.



\begin{enumerate}[label=(\roman*)]
    \item \textbf{Misrepresentation of cluster salience:} We prove that \emph{any} well-clustered t-SNE visualization can be produced identically by both 
    a strongly-separated, as well as  arbitrarily weakly-separated clustered datasets, 
    see Theorem \ref{thm:unclustHammer} and Corollary \ref{cor:twoCluster_n_Unclustered}. 
    Moreover, we prove that even a slight distance perturbation of inputs 
    can lead to vastly distinct visualizations, see Theorem \ref{thm:perturbhammer}. We identify the property of t-SNE that explains these peculiar behaviors, and use this understanding to design an adversarial attack that disrupts cluster structure in the output by moving as little as one point, see Figure \ref{fig:one_pt_perturb}.
\pagebreak
    \item \textbf{Misrepresentation of outliers}: We prove that, regardless of input, 
    the resulting t-SNE output is incapable of depicting extreme outliers, in the sense of depicting one point as substantially far away from all the others, see Theorem \ref{thm:outlier_abs}. In practice, on both synthetic and real datasets, we observe a more concerning phenomenon that faraway outliers are often subsumed into the cluster structure of the bulk of points, see Figures \ref{fig:outliers1} and \ref{fig:outliers_real_world}. 
\end{enumerate}


While there has been some work investigating the shortcomings of t-SNE in various practical settings (see Section \ref{sec:related_work_limitations} for a detailed discussion of the relevant literature), to the best of our knowledge this is the first work which theoretically analyzes some of the key limitations of t-SNE.

\section{Related Work}


Confidence in the data visualizations produced by t-SNE and related methods is a contentious subject in data science \citep{marx2024seeing, debodt2025lowdimensionalembeddingshighdimensionaldata}. Some argue that these methods have merit in terms of preserving cluster structure and therefore aid in exploratory data analysis, while others warn us about the distortions introduced by these methods.


\subsection{Performance Guarantees and Analysis of t-SNE}

\citet{shaham2017stochastic} were among the first to provide a guarantee on the visualization produced by optimal SNE embeddings of well-clustered data. Works by
\citet{linderman2019clustering} and 
\citet{arora2018analysis} refined and extended this analysis, showing that t-SNE outputs produced using gradient descent yield well-clustered visualizations so long as the input is sufficiently well-clustered. The latter work established this guarantee in considerable generality, including cases where the input is sampled from a mixture of well-separated log-concave distributions.

Along with these algorithmic performance guarantee results, there is a line of work that seeks to establish a more fundamental understanding of t-SNE as an optimization problem. \citet{cai2022theoretical}, for instance, characterized the distinct phases of gradient-based optimization of t-SNE, and proved an asymptotic equivalence between the early exaggeration phase of t-SNE and spectral clustering. \citet{auffinger2023equilibrium} proved a consistency result for a continuous analogue of t-SNE, viewing the optimization problem as producing a map between distributions rather than just point sets. \citet{jeong2024convergence} and \citet{weinkove2024stochastic} studied the gradient flow of t-SNE. The former showed mild assumptions under which optima exist, and the latter showed that, even in cases where the gradient flow diverges the relative interpoint distances stabilize in the limit.

\subsection{Weaknesses and Criticisms}
\label{sec:related_work_limitations}


\citet{bunte_aribrary_divergences} were among the first to investigate the potential shortcomings of using KL-divergence in a t-SNE visualization and proposed a generalization to other divergences that may be better suited for specific datasets and user needs.
Building upon the precision-recall framework of \citet{venna2010information},
\citet{im2018stochastic} extended this result and explored specific intrinsic structures within data that may be less suited for t-SNE. They concluded that while t-SNE is more attuned to reveal intrinsic cluster structure, it usually fails to reveal intrinsic manifold structure. 

In terms of analyzing cluster structure specifically, \citet{yang2021t} provided empirical evidence that t-SNE visualizations are prone to \emph{false negatives}. They presented a selection of well-clustered real-world datasets which t-SNE embeddings, even with reasonable parameter-tuning, do not seem to represent faithfully. They also showed that these practical datasets do not abide by the theoretical cluster separation conditions that are required by \citet{arora2018analysis}'s analysis. \citet{chari2023specious} argued that t-SNE and UMAP are unreliable tools for exploratory data analysis. Taking single-cell genomic data as an important real-world example, they provided systematic empirical evidence that these embeddings suffer high distortion, and often misrepresent neighborhood and cluster structure. Curiously, to the best of our knowledge, there is no systematic theoretical study investigating the failure modes of t-SNE.

More recently, \citet{snoeck_incompressibility} provided theoretical evidence that, not just t-SNE, but any embedding technique that attempts to visualize data in constant dimensions is bound to misrepresent neighborhood structure in almost all inputs. This result makes a  general assertion that misrepresentation is geometrically inevitable; the current work investigates how such misrepresentations manifest precisely in t-SNE on a variety of inputs of interest.

\section{Preliminaries}\label{sec:preliminaries}




Given an input dataset \(X = \{x_1,\ldots, x_n\}\subset \mathbb{R}^{D}\), the goal of t-SNE is to come up with an embedding \(Y = \{y_1,\ldots,y_n\}\subset \mathbb{R}^{d}\) (where \(d\ll D\), typically \(d=2\))\footnote{With a slight abuse of subset notation, duplicates are allowed in $X$ and $Y$.} that approximately maintains the neighborhood structure in $X$. t-SNE accomplishes this by assigning affinities to input data points which encode how likely an input point is to be a neighbor to a given point. The goal then is to find a configuration of the embedded points $Y$ that induces a similar neighborhood affinity. 
Specifically, let \(P = P(X) \in \mathbb{R}_{\geq 0}^{n\times n}\) and \(Q = Q(Y) \in \mathbb{R}_{\geq 0}^{n\times n}\) be the input and embedded
\textit{affinity matrices} describing the pairwise neighborhood similarities in the input and output, respectively. t-SNE constructs \(P\) 
by first computing neighborhood affinities for each point $i$ defined as (for any $i\neq j$)
\begin{equation}
    P_{j|i}(X; \sigma_i) := \frac{\exp(-\lVert x_j-x_i\rVert^2/(2\sigma_i^2))}{\sum_{k\neq i} \exp(-\lVert x_k -x_i\rVert^2/(2\sigma_i^2))} 
    \hspace{0.5in} P_{i|i}(X; \sigma_i):=0
    \label{eq:tsne_p}
\end{equation}
where $\sigma_i \geq 0$ encodes the (point-dependent) neighborhood scaling\footnote{If $\sigma_i=0$, define $P_{j|i}(X;0) := \lim_{\sigma_i \to 0} P_{j|i}(X;\sigma_i)$.
}. When $X$ and $\sigma_i^*$ are clear from context, we will often drop it from the notation. It is worth noting that $P_{\cdot|i}$ is a valid probability distribution over $[n]$.
The matrix \(P\) is then constructed based on a crucial parameter called the perplexity (denoted as \(\rho\) and taking values in 
\([1,n-1]\)), as follows: 
\begin{itemize}
    \item[(1)] For each $i \in [n]$, select the (unique, see Lemma \ref{lem:unique_neighborhood}) neighborhood scale \(\sigma_i^*\) that minimizes the gap between the entropy of \(P_{\cdot | i}(X;\sigma^*_i)\) and \(\log_2 \rho\). 
    \item[(2)] Define \(P =  [P_{ij}]_{i,j\in [n]}\) where \(P_{ij} := \frac{1}{2n}(P_{i|j}(\sigma^*_j) + P_{j|i}(\sigma^*_i))\).
\end{itemize}
To avoid the so-called \emph{crowding problem} (see \cite{van2008visualizing} for details), the output affinity matrix \(Q\) is computed based on a t-distribution. Specifically, for \(i\neq j\)
\begin{equation}
    Q_{ij}(Y) := \frac{(1 + \|y_i -y_j\|^2)^{-1}}{\sum_{k,l ; k\neq l} (1 + \|y_k-y_l\|^2)^{-1}} \hspace{0.5in}
    Q_{ii}(Y) :=0.
\end{equation}
As indicated before, the objective then is to minimize the gap between the input and output affinities $P$ and $Q$. This is accomplished by minimizing relative entropy (KL-divergence) between the \(P\) and \(Q\) affinities (viewed as probability distributions).  \[\textup{minimize}_Y \; \mathcal{L}_X(Y):= \KL(P(X)\|Q(Y)) = \sum_{\substack{i,j\\ i\neq j}} P_{ij}(X) \log\Big( \frac{P_{ij}(X) }  {Q_{ij}(Y)}\Big).\]
This highly non-convex objective is usually optimized by initializing at a good starting point via an \emph{early exaggeration phase}, followed by performing standard gradient descent methods and returning an embedding $Y$ that corresponds to a local minimum of the objective. Our central task is to study the nature of the these (local minimum) embeddings returned by t-SNE and their relation to the space of input datasets.
\begin{definition}
    For an $n$-point dataset\footnote{Without loss of generality, we shall assume that the input dimension $D=n-1$.} $X \subset \mathbb{R}^{n-1}$ and perplexity parameter $\rho \in [1,n-1]$, define
    $${\TSNE}_{\rho}(X) := \{ Y \subset \mathbb{R}^d : \nabla_Y \mathcal{L}_X(Y) = 0 \} $$
    as the set of outputs $Y\subset \mathbb{R}^d$ that are stationary to the t-SNE objective on a given input $X$.
    Furthermore, for a set of $n$-point datasets $\mathcal{X}_n$, we define:
    $${\TSNE}_{\rho}(\mathcal{X}_n) := \bigcup_{X \in \mathcal{X}_n}{\TSNE}_{\rho}(X).$$
    If $\mathcal{X}_n$ is the set of \emph{all} $n$-point datasets, we denote ${\TSNE}_\rho(\mathcal{X}_n)$ as ${\IMTSNE}$ to indicate the entire image of the t-SNE map.
\end{definition}
All the supporting proofs for our formal statements can be found in the Appendix, and the code related to our experimental demonstrations is available on Github at \href{https://github.com/njbergam/tsne-exaggerates-clusters}{\texttt{https://github.com/njbergam/tsne-exaggerates-clusters}}. 


\section{Misrepresentation of Cluster Salience}\label{sec:mis_clusters}



Previous works by \citet{linderman2019clustering} and \citet{arora2018analysis} have identified that clustered inputs induce clustered t-SNE visualizations in the sense that sufficiently well-separated Gaussian-shaped clusters in the input must produce corresponding well-separated clusters in the output visualization. 
A key question for practitioners left unanswered by these analyses is: when does a clustered output imply a clustered input? 
More generally, what information can be deduced about the input given a visualization? We begin to answer this question by proving that the strength of cluster separation in the input cannot be reliably inferred from the low-dimensional visualization. 




To quantify the strength of cluster separation 
in a dataset, we employ well-known distance-based cluster indices such as the average silhouette score \citep{rousseeuw1987silhouettes}, the Calinski-Harabasz index \citep{Caliński01011974}, and the Dunn index \citep{DunnIndex1974}. 
For sake of readability, we focus on presenting our results with respect to the average silhouette score. Our results hold identically for the other indices as well (see Appendices \ref{sec:CHindex} and \ref{sec:Dunnindex}). 




\begin{definition}
    Given a partition $C_1 \sqcup C_2 \sqcup \dots \sqcup C_k = [n]$ of $n$ points $\{x_1,\ldots,x_n\} = X$ into $k \geq 2$ parts, the \textbf{silhouette score} of a point $x_i$, denoted \(\mathcal{S}(i)\), is the normalized difference between the average within- and the closest across-cluster distances from $x_i$: 
$$
\mathcal{S}(i) := \frac{b(i)-a(i)}{\max \{b(i),a(i)\}} 
\hspace{0.25in}
a(i) := \sum_{j \in C^{(i)}} \frac{\lVert x_i - x_j \rVert}{|C^{(i)}|-1} 
\hspace{0.25in}
b(i) := \min_{
\substack{m \in [k] \\ C_m \neq C^{(i)}}} \sum_{j \in C_m} \frac{\lVert x_i - x_j \rVert}{|C_m|},
$$
where $C^{(i)}$ is the cluster to which $i$ belongs. Note that if $|C^{(i)}|=1$, then $\mathcal{S}(i)$ is defined to be zero.
The \textbf{average silhouette score} then is simply the average across all points in $X$:
$$\bar{\mathcal{S}}(X; C_{m\in[k]}) := \frac{1}{n} \sum_{i \in [n]} \mathcal{S}(i).$$
\end{definition}
Observe that the (average) silhouette score ranges from $-1$ to $1$ with higher scores reflecting a large separation between clusters relative to cluster diameter. A score of zero reflects minimal separation between clusters, while negative values reflect cluster overlaps.

\subsection{Different Inputs, Same Output}
Defining the strength of a clustering with respect to the silhouette score, we show that any stationary t-SNE output (including outputs with arbitrarily well-separated clusters) can be produced by an input with minimal distance separation between the clusters:

\begin{restatable}{theorem}{UnclustHammer} 
\label{thm:unclustHammer}
    Fix any $n > k > 1$, and $n$-point dataset $X \subset \mathbb{R}^{n-1}$ with partition $C_1 \sqcup \cdots \sqcup C_k = [n]$ such that $|C_{m\in[k]}| > 1$ and $\bar{\mathcal{S}}(X; C_{m\in[k]})$ is well defined. For all $0 < \epsilon \leq 1$, there exists $n$-point dataset $X_\epsilon \subset \mathbb{R}^{n-1}$ such that 
    $$\bar{\mathcal{S}}(X_\epsilon; C_{m\in[k]}) = \epsilon \cdot \bar{\mathcal{S}}(X; C_{m\in[k]}), $$
    yet, for any $\rho \in [1, n-1]$,
    $${\TSNE}_\rho(X) = {\TSNE}_\rho(X_\epsilon).$$
\end{restatable}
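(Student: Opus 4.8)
The plan is to exploit the fact that $\mathcal{L}_X(Y) = \KL(P(X)\|Q(Y))$, and hence $\nabla_Y\mathcal{L}_X$ and the stationary set $\TSNE_\rho(X)$, depend on the input only through the affinity matrix $P(X)$ (which itself is built using $\rho$). So it suffices to produce $X_\epsilon\subset\R^{n-1}$ with $P(X_\epsilon) = P(X)$ for every $\rho$ and with $\bar{\mathcal{S}}(X_\epsilon; C_{m\in[k]}) = \epsilon\,\bar{\mathcal{S}}(X;C_{m\in[k]})$. The engine will be a simple invariance: adding a constant $\beta\geq 0$ to \emph{every} squared interpoint distance leaves $P$ unchanged. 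Indeed, for each $i$ and each bandwidth $\sigma$, every numerator of $P_{j|i}(\,\cdot\,;\sigma)$ acquires the same factor $\exp(-\beta/2\sigma^2)$, which cancels against the denominator; hence the conditionals $P_{\cdot|i}(\,\cdot\,;\sigma)$, their entropies, the perplexity-matched scales $\sigma_i^\ast$, and therefore $P$ are all left intact, simultaneously for all $\rho\in(1,n-1)$.

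Next I would realize the shifted squared distances as a genuine point set in $\R^{n-1}$. Writing $D = (\|x_i-x_j\|^2)_{i,j}$ and, for $\beta\geq 0$, $D^{(\beta)} := D + \beta(\vone\vone^\top - \mI)$, with $J := \mI - \tfrac1n\vone\vone^\top$ the centering projection, classical multidimensional scaling gives a realization of $D^{(\beta)}$ by $n$ points in $\R^{n-1}$ provided $G^{(\beta)} := -\tfrac12 J D^{(\beta)} J$ is positive semidefinite of rank $\leq n-1$. But $G^{(\beta)} = \big(-\tfrac12 JDJ\big) + \tfrac\beta2 J$ is a sum of two PSD matrices — the first because $D$ is a Euclidean squared-distance matrix (Schoenberg), the second because $\beta\geq 0$ and $J$ is a projection — and both summands have column space inside the hyperplane $\vone^\perp$, so $\operatorname{rank}(G^{(\beta)})\leq n-1$. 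Factoring $G^{(\beta)} = UU^\top$ with $U\in\R^{n\times(n-1)}$ and reading off its rows yields $X_\beta = \{x_1^{(\beta)},\dots,x_n^{(\beta)}\}\subset\R^{n-1}$ with $\|x_i^{(\beta)}-x_j^{(\beta)}\|^2 = \|x_i-x_j\|^2+\beta$ for all $i\neq j$; by the invariance above, $\TSNE_\rho(X_\beta) = \TSNE_\rho(X)$ for every $\rho$.

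Finally I would tune $\beta$ by an intermediate-value argument. Keeping the same partition, set $g(\beta) := \bar{\mathcal{S}}(X_\beta; C_{m\in[k]})$. The pairwise distances $\sqrt{\|x_i-x_j\|^2+\beta}$ vary continuously in $\beta$ and are strictly positive for $\beta>0$ (so every $\max\{a(i),b(i)\}>0$), while $\beta = 0$ is covered by the well-definedness hypothesis; hence $g$ is continuous on $[0,\infty)$. At $\beta = 0$, $X_0$ is isometric to $X$, so $g(0) = \bar{\mathcal{S}}(X;C_{m\in[k]})$. As $\beta\to\infty$, for each $i$ one has $a(i),b(i)\geq\sqrt\beta\to\infty$ while $b(i)-a(i)\to 0$ (each contributing term satisfies $\sqrt{d^2+\beta}-\sqrt\beta = d^2/(\sqrt{d^2+\beta}+\sqrt\beta)\to 0$), so $\mathcal{S}(i)\to 0$ and $g(\beta)\to 0$. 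Since $\epsilon\in(0,1]$, the target $\epsilon\,\bar{\mathcal{S}}(X;C_{m\in[k]})$ lies between $g(0)$ and the limit $0$, so some $\beta^\ast\in[0,\infty)$ has $g(\beta^\ast) = \epsilon\,\bar{\mathcal{S}}(X;C_{m\in[k]})$; take $X_\epsilon := X_{\beta^\ast}$. I expect the only real friction to be the dimension bookkeeping in the middle step — certifying that the perturbed configuration still embeds in $\R^{n-1}$ and not a larger space — together with the routine continuity and nondegeneracy checks for the silhouette; everything else is direct computation.
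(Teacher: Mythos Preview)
Your proposal is correct and shares the paper's core insight: the input affinity matrix $P$ is invariant under adding a nonnegative constant to all squared interpoint distances, the shifted distance matrix is still Euclidean in $\R^{n-1}$ by Schoenberg, and the silhouette score along the resulting one-parameter family is continuous and tends to zero, so the intermediate value theorem produces $X_\epsilon$.

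The one substantive difference is the parameterization. The paper combines additive \emph{and} multiplicative invariance into the path $C\mapsto ((1-C)\cdot X)_{+C}$ over the compact interval $[0,1]$, which lands exactly on the unit regular simplex at $C=1$ where the silhouette score equals zero. You instead use only additive invariance over $\beta\in[0,\infty)$ and obtain the zero endpoint as a limit via the estimate $\sqrt{d^2+\beta}-\sqrt{\beta}\to 0$. Your route is slightly more elementary---it dispenses with the multiplicative-invariance lemma entirely---at the cost of a noncompact parameter range and a limit argument in place of an explicit endpoint. Both approaches yield the same conclusion with no real change in difficulty.
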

It is important to understand the implications of this result. For any high-dimensional dataset $X$ (that contains well-separated clusters), we can always find an impostor dataset  $X_\epsilon$ with minimal cluster separation such that \emph{all} t-SNE (local as well as global) stationary points of $X$ and $X_\epsilon$ match perfectly! In other words it is \emph{impossible} to distinguish between $X$ and $X_\epsilon$ based on the low-dimensional t-SNE visualization.

As a consequence, the same visualization of salient clusters can be produced by a sequence of impostor datasets containing clusters ranging from well-separated to minimally separated.
\begin{restatable}{corollary}{TwoClusterNUnclustered} 
\label{cor:twoCluster_n_Unclustered}
    For all $n \geq 4$ even, and partition $C_1 \sqcup C_2 = [n]$ such that $|C_1|=|C_2|=\frac{n}{2}$.
    There exist a sequence of $n$-point datasets in $\mathbb{R}^{n-1}$, $\{ X_\epsilon\}_{0 < \epsilon \leq 1}$, with 
        $$\bar{\mathcal{S}}(X_\epsilon; C_1,C_2) = \epsilon$$
    such that for any $\rho \in [1, n-1]$, we have $Y\in \bigcap_{0<\epsilon \leq 1}{\TSNE}_\rho(X_\epsilon)$ with
    $$\bar{\mathcal{S}}(Y; C_1,C_2) = 1.$$ 
\end{restatable}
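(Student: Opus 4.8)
The plan is to derive the Corollary from Theorem~\ref{thm:unclustHammer}, which already manufactures, from any single seed, an $\epsilon$-parametrized family of impostors sharing the same t-SNE stationary set; it therefore suffices to exhibit one maximally clustered seed realizing the prescribed balanced $2$-partition whose stationary set contains a maximally clustered embedding. First observe that $\bar{\mathcal{S}}(\,\cdot\,;C_1,C_2)=1$ forces $\mathcal{S}(i)=1$ for every $i$, and $\mathcal{S}(i)=1$ forces $a(i)=0$; hence each block must be a single repeated point, the two points being distinct (so $b(i)>0$). I would therefore take the seed $X^{\star}\subset\mathbb{R}^{n-1}$ to place all of $C_1$ at a point $a$ and all of $C_2$ at a distinct point $b$. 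Then $\bar{\mathcal{S}}(X^{\star};C_1,C_2)=1$ is well defined and, since $|C_1|=|C_2|=n/2>1$, the dataset $X^{\star}$ satisfies the hypotheses of Theorem~\ref{thm:unclustHammer} with $k=2$; applying it produces, for each $\epsilon\in(0,1]$, a dataset $X_\epsilon\subset\mathbb{R}^{n-1}$ with $\bar{\mathcal{S}}(X_\epsilon;C_1,C_2)=\epsilon\cdot 1=\epsilon$ and ${\TSNE}_\rho(X_\epsilon)={\TSNE}_\rho(X^{\star})$ for all $\rho\in(1,n-1)$, so that $\bigcap_{0<\epsilon\le1}{\TSNE}_\rho(X_\epsilon)={\TSNE}_\rho(X^{\star})$.

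It then remains to produce some $Y\in{\TSNE}_\rho(X^{\star})$ with $\bar{\mathcal{S}}(Y;C_1,C_2)=1$. The natural candidate is the analogously collapsed embedding: place all of $C_1$ at $0\in\mathbb{R}^d$ and all of $C_2$ at some $v$ with $\|v\|=r>0$; this has $\bar{\mathcal{S}}(Y;C_1,C_2)=1$ for every $r>0$, so it suffices to pick $r$ with $\nabla_Y\mathcal{L}_{X^{\star}}(Y)=0$. The symmetry of $X^{\star}$ (invariance under permuting within $C_1$, within $C_2$, and swapping the two blocks) makes $P(X^{\star})$ constant on within-block pairs and constant on across-block pairs --- and, incidentally, independent of $a$ and $b$, depending only on $\rho$ and $n$ --- while $Q(Y)$ has the same structure. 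Consequently every within-block term of the t-SNE gradient vanishes identically (those $y_i-y_j$ are $0$), and the across-block part of the gradient at each point is a scalar multiple of $v$; stationarity thus collapses to the single scalar equation matching the (constant) across-block entry of $P(X^{\star})$ with that of $Q(Y)$, an equation in $u:=(1+r^2)^{-1}$. I would solve it for $u\in(0,1)$, i.e.\ for a finite $r>0$, using that the across-block entry of $P(X^{\star})$ is strictly smaller than its within-block entry (ultimately a consequence of $\rho<n-1$) together with the normalization of $P(X^{\star})$, which pin the required value of $u$ into $(0,1)$. This furnishes $Y\in{\TSNE}_\rho(X^{\star})=\bigcap_{0<\epsilon\le1}{\TSNE}_\rho(X_\epsilon)$ with $\bar{\mathcal{S}}(Y;C_1,C_2)=1$, as needed.

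The main obstacle is showing the scalar stationarity equation is solvable for \emph{every} admissible perplexity $\rho\in(1,n-1)$. This requires pinning down the perplexity-calibrated $P(X^{\star})$ of the fully collapsed seed --- in particular how its across-block entry depends on $\rho$ through the entropy-matching step applied to a point with $n/2-1$ coincident ``neighbors'' and $n/2$ equidistant ``non-neighbors''. The subtle regime is small perplexity (roughly $\rho\le n/2-1$), where the across-block mass of $P(X^{\star})$ degenerates toward $0$, the attractive force between the two embedded clusters disappears, and the scalar equation needs separate handling; for $n=4$ this regime is vacuous and the argument above applies verbatim. The remaining checks --- that the collapsed configuration is actually a zero of the full gradient (not merely of the restricted, symmetry-reduced one) and that it has silhouette score exactly $1$ --- are routine bookkeeping with the symmetry of the two configurations and the definition of $\bar{\mathcal{S}}$.
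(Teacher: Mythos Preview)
Your approach is essentially the paper's: the same collapsed two-point seed $X^{\star}$ (all of $C_1$ at one point, all of $C_2$ at another), the same invocation of Theorem~\ref{thm:unclustHammer} to manufacture $\{X_\epsilon\}$, and the same collapsed two-point output $Y$. The paper's execution is slightly more concrete than yours: instead of arguing abstractly that the scalar stationarity equation has a solution, it writes the conditional affinities of $X^\star$ explicitly and takes $r=\sqrt{\exp(1/(2\sigma^{2}))-1}$ (with $\sigma$ the common perplexity-calibrated bandwidth), which makes $P(X^\star)=Q(Y)$ hold \emph{identically} and hence renders $Y$ a \emph{global} minimizer, not merely a stationary point. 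The small-perplexity obstacle you isolate is exactly the issue of whether this $\sigma$ is strictly positive; the paper handles it by asserting that $\sigma>0$ for all $\rho\in(1,n-1)$, after which the explicit $r$ is finite and your scalar equation is solved.
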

The above shows that $Y$, a perfectly clustered visualization according to silhouette score, is a local (and  global, see the proof in Appendix) 
minimizer for any member of a set of inputs of \textit{arbitrary} silhouette score. 
Thus, even from a visualization which is \textit{perfectly} clustered, the strength of the input's cluster structure cannot be inferred.  

\begin{figure}[t]
    \centering
  \includegraphics[width=0.97\linewidth]{./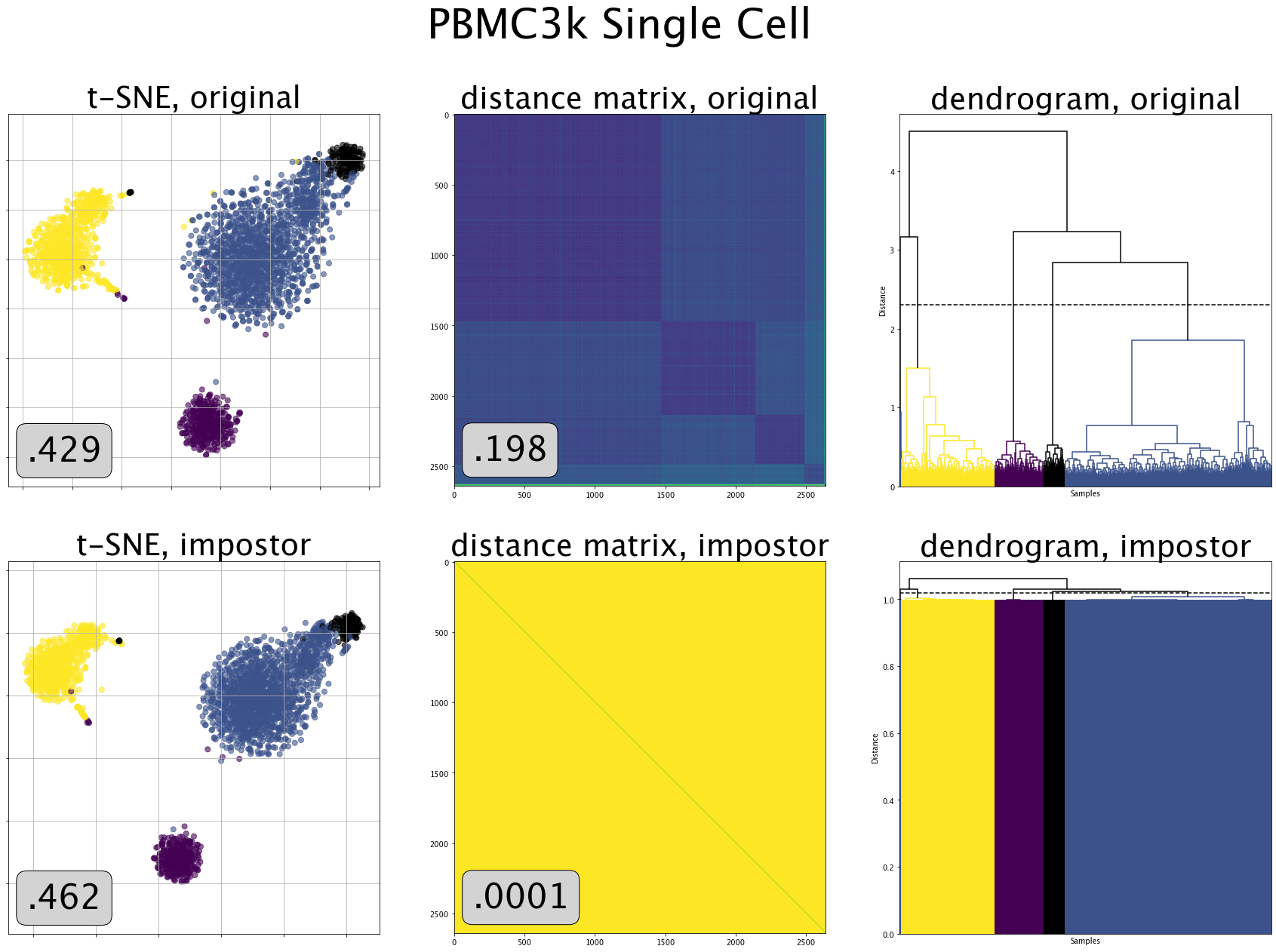}
    \caption{
    \small
    Visualizations of single-cell data (top row) versus an impostor dataset with arbitrarily small cluster separation (bottom row).
    Based on the 2D t-SNE visualization (left column), it is difficult to distinguish which dataset (real or impostor) may have produced the plot. 
    Plotting the input interpoint distance matrices (middle
    column) suggests that the clusters in the impostor dataset are significantly less separated than in the original dataset. 
 The corresponding dendrograms (right column, produced using the Ward's method) further elucidate the relative strength of pairwise distances. It is worth emphasizing that the impostor dataset retains the relative rank ordering of the pairwise distances (and therefore the ordering of nearest neighbors), and only distorts the distances to make the differences much finer. Note that the color coding in all of the scatterplots corresponds to a stable cut in the hierarchical clustering given by the dendrogram of the original dataset, and the bottom left labels correspond to silhouette scores with respect to that clustering.}
    \label{fig:single_cell_false_pos}
\end{figure}

Note that the existence of an impostor $X_\epsilon$ is not just theoretical; it can be constructed practically as well (see Appendix \ref{app:impostor_construction} for an explicit construction). Hence this phenomenon can be demonstrated in real-world scenarios, see Figure \ref{fig:single_cell_false_pos}. 
In this case, we select a preprocessed version of the well-known PBMC3k single-cell genomics dataset ($2638$ points, $50$ dimensions; \citet{single_cell}) as $X$. We show that there is an ``impostor dataset'' $X_\epsilon$ that is essentially indistinguishable from the real dataset in terms of its 2D t-SNE visualization, yet has a much weaker cluster separation than the original dataset. The difference between impostor and original can be quantified by silhouette score and visualized in terms of the interpoint distance matrix and dendrogram.

It is worth emphasizing that while the distance information in the original and impostor dataset is dramatically different, the \textit{ordinal} relationship between neighbors is identical between the datasets, as demonstrated by the corresponding structure in the dendrograms. 





\subsection{Different Outputs, Similar Inputs}

The previous section established that inputs with qualitatively distinct metric structure can yield the exact same t-SNE output. We continue with a complementary result: that near-identical inputs can yield very distinct outputs. 

\subsubsection{Instability under small distance perturbations}


\begin{restatable}{theorem}{Perturbhammer}
\label{thm:perturbhammer}
    Fix any $n \geq 2$ and $\rho \in [1, n-1]$. For all $\epsilon>0$ and all $Y, Y' \in {\IMTSNE}$, there exists $n$-point datasets $X = \{x_1,\ldots,x_n \}$ and $ X'=\{x'_1,\ldots,x'_n\} \subset \mathbb{R}^{n-1}$ such that $\forall i\neq j$
    $$1 - \epsilon \leq \frac{\lVert x_i - x_j \rVert^2}{\lVert x'_i - x'_j \rVert^2} \leq 1 + \epsilon,$$
 yet $Y \in {\TSNE}_\rho(X)$ and 
    $Y' \in {\TSNE}_\rho(X').$
\end{restatable}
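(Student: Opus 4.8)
The plan is to lean on one structural feature of t-SNE: the gradient $\nabla_Y\mathcal{L}_X(Y)$ depends on the input $X$ only through the affinity matrix $P(X)$. Indeed $\mathcal{L}_X(Y)=\sum_{i\neq j}P_{ij}(X)\log P_{ij}(X)-\sum_{i\neq j}P_{ij}(X)\log Q_{ij}(Y)$, where the first sum is independent of $Y$ and $Q$ is independent of $X$, so $\nabla_Y\mathcal{L}_X(Y)$ is a fixed linear functional of the entries of $P(X)$. Hence $P(X)=P(\tilde X)\Rightarrow{\TSNE}_\rho(X)={\TSNE}_\rho(\tilde X)$. It therefore suffices to start from \emph{any} witness datasets for $Y$ and $Y'$ and perturb each to a new dataset with the same $P$ but with pairwise squared distances that are relatively as close as we like to those of the other.

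The device for this is a uniform additive shift of all squared distances. I would establish the lemma: for every $c\ge 0$ and every $n$-point $Z\subset\mathbb{R}^{n-1}$ there is an $n$-point $Z^{(c)}\subset\mathbb{R}^{n-1}$ with $\|z^{(c)}_i-z^{(c)}_j\|^2=\|z_i-z_j\|^2+c$ for all $i\neq j$, and $P(Z^{(c)})=P(Z)$. Realizability in dimension $n-1$ is classical (via Schoenberg's criterion: if $P_c$ is the centering matrix, $-\tfrac12 P_c(D+c(\mathbf 1\mathbf 1^\top-I))P_c=-\tfrac12 P_cDP_c+\tfrac{c}{2}P_c\succeq 0$, and it has rank at most $n-1$). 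Invariance of $P$ holds because for every fixed $\sigma>0$ the factor $e^{-c/(2\sigma^2)}$ cancels between numerator and denominator of $P_{j|i}(\,\cdot\,;\sigma)$, so $P_{j|i}(Z^{(c)};\sigma)=P_{j|i}(Z;\sigma)$ as functions of $\sigma$; since the entropy of $P_{\cdot|i}(\,\cdot\,;\sigma)$ is then unchanged for every $\sigma$ (and the $\sigma\to0$ limits agree because the ordering of the distances from each point is preserved), the perplexity-calibrated $\sigma_i^\ast$ is identical, and hence so is the symmetrized $P$.

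With the lemma in hand the theorem follows immediately. Fix $Y,Y'\in\IMTSNE$ and $\epsilon>0$; by definition of $\IMTSNE$ choose $Z,Z'\subset\mathbb{R}^{n-1}$ with $Y\in{\TSNE}_\rho(Z)$ and $Y'\in{\TSNE}_\rho(Z')$. Put $R:=\max_{i\neq j}\max\{\|z_i-z_j\|^2,\|z'_i-z'_j\|^2\}$ and pick any $M\ge\max\{R/\epsilon,\,1\}$. Let $X:=Z^{(M)}$ and $X':=(Z')^{(M)}$. By the lemma $P(X)=P(Z)$ and $P(X')=P(Z')$, so by the first step $Y\in{\TSNE}_\rho(X)$ and $Y'\in{\TSNE}_\rho(X')$, while for all $i\neq j$
$$\frac{\|x_i-x_j\|^2}{\|x'_i-x'_j\|^2}=\frac{\|z_i-z_j\|^2+M}{\|z'_i-z'_j\|^2+M}\in\Big[\tfrac{M}{M+R},\ \tfrac{M+R}{M}\Big]\subseteq[1-\epsilon,\,1+\epsilon].$$

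The arithmetic here is negligible; the only delicate point — the main obstacle — is the shift lemma, and within it two things: confirming the constant-shifted squared-distance matrix is realizable already in dimension $n-1$ rather than requiring extra coordinates, and confirming that the perplexity calibration is a function of $Z$ only through the family of maps $\sigma\mapsto P_{\cdot|i}(Z;\sigma)$, so that pointwise invariance of these maps together with preservation of distance orderings pins down $\sigma_i^\ast$ (including in the degenerate regime $\sigma_i^\ast=0$). These are precisely the facts that also underlie Theorem \ref{thm:unclustHammer}, so the same reasoning transfers.
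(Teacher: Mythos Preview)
Your proposal is correct and follows essentially the same route as the paper: both rely on the additive-invariance lemma (your ``shift lemma'' is exactly Lemmas~\ref{lem:X+C_exists} and~\ref{lem:tSNE_additive_invariance}), and the paper's one-line proof via Lemma~\ref{lem:image_of_tSNE} unpacks to precisely your argument of shifting two witness datasets by a large common constant. Your version is in fact slightly more direct, since it bypasses the multiplicative rescaling used in the proof of Lemma~\ref{lem:image_of_tSNE} and works with the raw additive shift alone.
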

Thus arbitrarily minor interpoint distance perturbations of the input dataset can develop into massive changes in the visualization. Figure \ref{fig:sameIn_diffOut} demonstrates this phenomenon quite clearly. We start with a dataset $X$ that is a regular unit simplex (all pairwise distances are unit length). By systematically perturbing the input $X$ ever so slightly ($\epsilon\leq 0.01$), t-SNE produces strikingly different outputs. 

The key observation behind our main Theorems \ref{thm:unclustHammer} and \ref{thm:perturbhammer} is the simple yet counter-intuitive fact that t-SNE is not only invariant under multiplicative scaling of the input distances, but also \emph{additive} shifts thereof – a property also investigated independently by \citet{lee2011, lee2014}. Specifically given a dataset $X=\{x_1,\ldots,x_n\}$, for any dataset $X'=\{x'_1,\ldots,x'_n\}$ and $C \in \R$ such that, $ \lVert x'_i-x_j' \rVert^2 = \lVert x_i-x_j \rVert^2+C \geq 0$ for $i \neq j$, we have $\TSNE_\rho(X) = \TSNE_\rho(X')$ (see Lemma \ref{lem:tSNE_additive_invariance} for a formal statement, and \citet{lee2011}, Section 4). As a consequence, for any input dataset, we can simply pump up the interpoint distances and construct an impostor dataset which has the same visualization profile but is arbitrarily close to a regular simplex (and hence is arbitrarily unclustered)\footnote{See Algorithm \ref{alg:impostor} for a formalization of this process.}. 
This observation also leads to the following seemingly bizarre fact. 

\begin{figure}
    \centering
    \includegraphics[width=1\linewidth]{./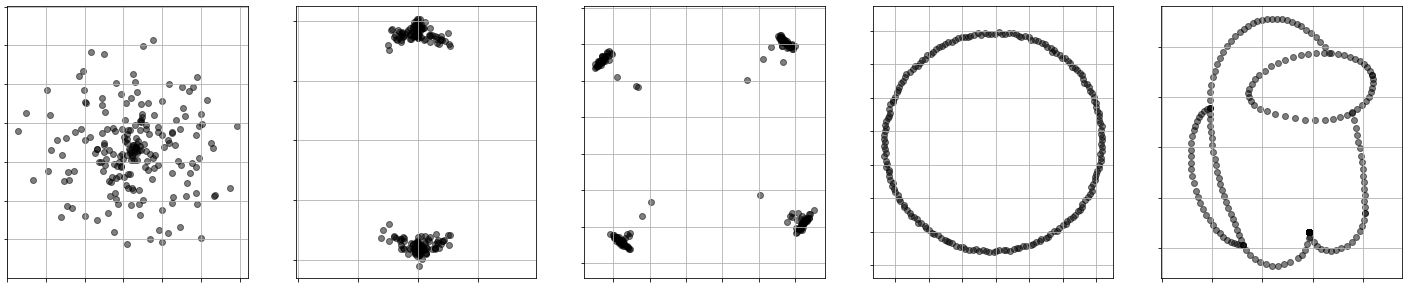}
    \caption{
    \small
    Various different 2D t-SNE visualizations produced by adversarial perturbations of a 200-point unit regular simplex. 
    Each pair of perturbations satisfies the conditions of Theorem \ref{thm:perturbhammer} for $\epsilon = 0.01$.}
    \label{fig:sameIn_diffOut}
\end{figure}

\begin{lemma}\label{lem:image_of_tSNE}
    Fix any $n \geq 2$ and $\rho \in [1, n-1]$. For any $\epsilon>0$, define the set of $\epsilon$-perturbations of a unit simplex as $\Delta_\epsilon := \{ X=\{x_1, \dots, x_n\} \subset \mathbb{R}^{n-1}: \forall i\neq j, \lVert x_i - x_j \rVert^2 \in [1-\epsilon, 1+\epsilon]\}$. Then, for all $\epsilon>0$
    $${\IMTSNE} = {\TSNE}_\rho(\Delta_\epsilon).$$
\end{lemma}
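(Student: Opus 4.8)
The plan is to prove the two inclusions ${\IMTSNE} \supseteq {\TSNE}_\rho(\Delta_\epsilon)$ and ${\IMTSNE} \subseteq {\TSNE}_\rho(\Delta_\epsilon)$ separately. The first is immediate: every dataset in $\Delta_\epsilon$ is in particular an $n$-point dataset in $\mathbb{R}^{n-1}$, so $\Delta_\epsilon$ is a subset of the class of all $n$-point datasets, and taking the union of ${\TSNE}_\rho(X)$ over a smaller family can only produce a smaller set. Hence ${\TSNE}_\rho(\Delta_\epsilon) \subseteq {\IMTSNE}$ for every $\epsilon > 0$.

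The substance is the reverse inclusion. Fix $Y \in {\IMTSNE}$; by definition there is some $n$-point dataset $X = \{x_1,\dots,x_n\} \subset \mathbb{R}^{n-1}$ with $Y \in {\TSNE}_\rho(X)$. I would like to additively inflate the squared interpoint distances of $X$ so that they all land in $[1-\epsilon, 1+\epsilon]$, and then invoke the additive-invariance fact (Lemma \ref{lem:tSNE_additive_invariance}) to conclude that the inflated dataset $X'$ still has $Y \in {\TSNE}_\rho(X')$ while now lying in $\Delta_\epsilon$. Concretely: let $D_{\max} = \max_{i\neq j}\lVert x_i - x_j\rVert^2$ and $D_{\min} = \min_{i \neq j}\lVert x_i - x_j\rVert^2$. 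Adding a large constant $C$ to every squared distance shrinks the \emph{ratio} $(D_{\max}+C)/(D_{\min}+C)$ toward $1$; choosing $C$ large enough makes this ratio at most $(1+\epsilon)/(1-\epsilon)$. After that I rescale \emph{multiplicatively} by a factor so that the squared distances are centered to lie in $[1-\epsilon,1+\epsilon]$ — since t-SNE is also invariant under multiplicative scaling of squared distances, this preserves the stationary set as well. The combination of one additive shift and one multiplicative scaling yields $X'$ with all squared distances in $[1-\epsilon,1+\epsilon]$ and ${\TSNE}_\rho(X') = {\TSNE}_\rho(X) \ni Y$, so $Y \in {\TSNE}_\rho(\Delta_\epsilon)$.

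Two technical points need care. First, I must check that the inflated/rescaled distances are actually \emph{realizable} as an $n$-point configuration in $\mathbb{R}^{n-1}$: adding a positive constant to all off-diagonal squared distances of an embeddable configuration keeps it embeddable (this is the standard fact that the "cone of squared-distance matrices realizable in $\mathbb{R}^{n-1}$" is closed under adding $C(\mathbf{1}\mathbf{1}^\top - I)$; equivalently, adding $C$ only increases the ambient dimension needed by at most one, and we have $D = n-1$ to spare — indeed the resulting matrix is a squared-distance matrix of $n$ points since its "Gram-after-centering" matrix stays positive semidefinite). This is presumably exactly the content already packaged in Lemma \ref{lem:tSNE_additive_invariance}, which asserts $\TSNE_\rho(X) = \TSNE_\rho(X')$ whenever such an $X'$ exists with $\lVert x_i' - x_j'\rVert^2 = \lVert x_i - x_j\rVert^2 + C \ge 0$; I would cite it directly and note that for $C \ge 0$ such an $X'$ always exists in $\mathbb{R}^{n-1}$. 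Second, I should confirm the multiplicative-invariance claim is available in the same form (it follows from the perplexity-calibration step rescaling $\sigma_i^*$ proportionally and $Q$ being scale-covariant under rescaling $Y$; the paper asserts this invariance in the paragraph preceding the lemma). The main obstacle, then, is not any deep argument but making the realizability/embeddability bookkeeping airtight — ensuring the shifted-and-scaled squared-distance matrix genuinely corresponds to a point set in $\mathbb{R}^{n-1}$ — and this is exactly what the cited additive-invariance lemma is designed to handle, so the proof reduces to an explicit choice of $C$ and scaling factor followed by a one-line appeal to that lemma and to multiplicative invariance.
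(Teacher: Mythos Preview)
Your proposal is correct and follows essentially the same route as the paper: prove only the nontrivial inclusion by taking an arbitrary $X$ with $Y\in\TSNE_\rho(X)$, additively shifting the squared interpoint distances by a large constant, then multiplicatively rescaling so all squared distances land in $[1-\epsilon,1+\epsilon]$, invoking the additive and multiplicative invariance lemmas together with the Schoenberg-type embeddability fact (packaged separately in the paper as Lemma~\ref{lem:X+C_exists}) to conclude. The paper just supplies explicit choices $A=\tfrac{1}{2\epsilon}\lvert(1-\epsilon)D_{\max}-(1+\epsilon)D_{\min}\rvert$ and $B=\tfrac{1+\epsilon}{D_{\max}+A}$ in place of your ``choose $C$ large enough'' description.
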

In other words there is a set of datasets \(\Delta_\epsilon\) arbitrarily close to a regular unit simplex that generates \textit{all} possible stationary t-SNE outputs! This result indicates that t-SNE outputs are highly unstable on near-simplex inputs (cf.\ Figure \ref{fig:sameIn_diffOut}), which has real-world consequences since many high-dimensional datasets fall into this regime \citep{conc_hd_dist_1, aggarwal2001surprising} due to the concentration of measure phenomenon \citep{ledoux2001concentration}. 

\vspace{0.2in}
\subsubsection{Instability under insertion of a single point}

The previous lemma tells us that on intrinsically high-dimensional data, small perturbations of all the interpoint distances can have outsize effects on the t-SNE output. We observe that such datasets are susceptible to a much simpler adversarial attack: namely, the insertion of a \textit{single} data point. 

Consider a dataset $X$ sampled from a mixture of two high-dimensional Gaussians. t-SNE, as expected, reveals the two underlying clusters (see Figure \ref{fig:one_pt_perturb}, first panel). However, we can add just a single ``poison point" to $X$ and destroy the clustered visualization (see Figure \ref{fig:one_pt_perturb}, second panel; see also Figure \ref{fig:poison_point_scaling}). This failure mode of t-SNE is also observed on a real high-dimensional datasets (see Figures \ref{fig:outliers_real_world} and \ref{fig:bbc_appendix}, left vs.\ center). 

How can a single point do so much damage? The answer lies in how t-SNE registers neighborhood information and its potential consequences in high dimensions. In the case of Figure \ref{fig:one_pt_perturb}, the poison point is placed at the mean of the dataset (this is captured clearly by the PCA plot). Due to the high dimensionality of the configuration (indeed, this mixture of two Gaussians is approximately a regular simplex) the poison point is the nearest neighbor to most of the points in the dataset. This significantly changes the input affinity matrix, downweighting the intra-cluster affinities and upweighting affinity for the poison point, resulting in a visualization which keeps points in tight proximity to their nearest neighbor (namely, the poison point). 

This disproportionate amount of affinity towards the poison point is modulated by the perplexity value. As \(\rho\) decreases, the proportion of affinity placed on the poison point increases. In the limiting case, when \(\rho = 1\), \textit{all} affinity is assigned to the poison point. We can formalize this case as follows: 


\begin{restatable}{theorem}{PoisonPoint} 
\label{thm:poison_point}
Take \(n\geq 4\) even. There exists two \(n\)-point datasets such that:
\begin{itemize}
    \item \(X_0 \subset \mathbb{R}^{n-1}\) has silhouette score \(0\) with respect to all partitions, and
    \item \(X_1\subset \mathbb{R}^{n-1}\) has silhouette score \(\frac{\sqrt{3}-1}{\sqrt{3}}\geq 0.42\) with respect to some partition of points,
\end{itemize}
yet there exists \(x_0\in \mathbb{R}^{n-1}\) such that (for \(\rho = 1\))
$$
{\TSNE}_\rho(X_0\cup x_0)={\TSNE}_\rho(X_1\cup x_0).
$$
\end{restatable}

Therefore the poison point attack can make it \textit{impossible} to distinguish between a well-clustered and a completely un-clustered input based on their t-SNE loss landscapes (this is experimentally corroborated in Figure \ref{fig:poison_point_thm}). 
Our practical demonstrations (e.g.\ Figures \ref{fig:one_pt_perturb}, 
\ref{fig:outliers_real_world}, \ref{fig:poison_point_scaling},
\ref{fig:bbc_appendix}) indicate that a similar level of degradation persists for typical settings of \(\rho\) in the usage of t-SNE.




\begin{figure}
    \centering
       \includegraphics[width=\linewidth]{./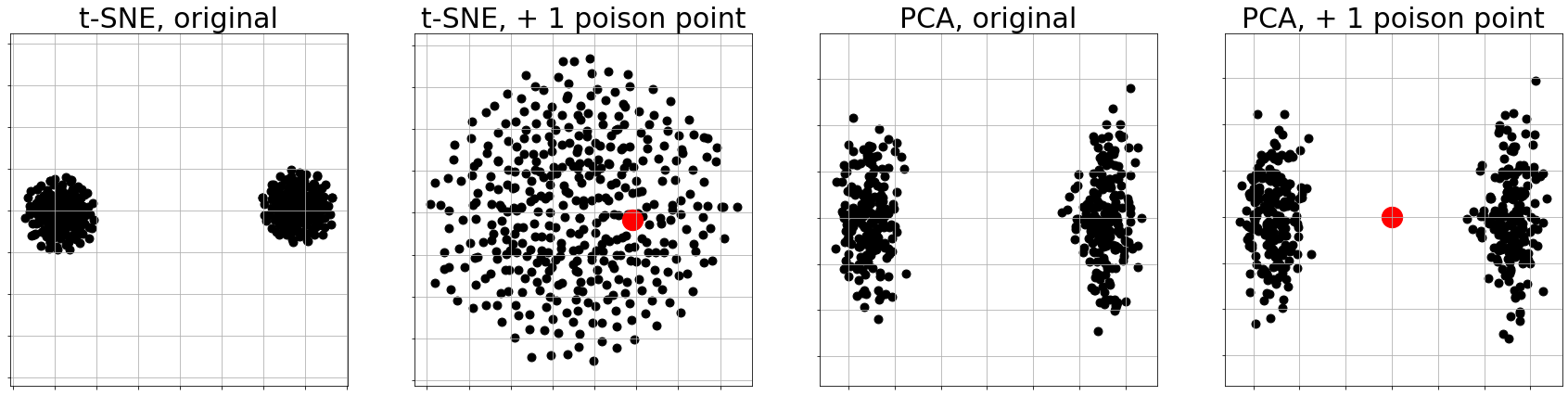}
    \caption{    \small
     t-SNE versus PCA's radically different responses to the injection of a single ``poison'' point in the input dataset. The original dataset, visualized in panels 1 and 3, 
      consists of \(400\) points sampled from a mixture of two Gaussians in \(\mathbb{R}^{2000}\). The poison point is then placed at the mean of the previously sampled points; the resulting $401$ point dataset is visualized in panels 2 and 4. Note that this behavior persists for larger datasets, see Appendix \ref{sec:appendix_outliers_experiment}.}
      \label{fig:one_pt_perturb}
\end{figure}




\section{Misrepresentation of Outliers}

Most analysis of t-SNE, including the previous section, is concerned with whether it faithfully depicts global structure, specifically cluster structure. In this section, we consider how t-SNE represents points that drastically deviate from the global structure: namely, outliers. It is natural to hope that data visualization methods can enable the identification of outliers. Unfortunately, we find that t-SNE may arbitrarily suppress the severity of outliers present in the input dataset.


This phenomenon has been observed empirically in prior work, though  to our knowledge we are the first to formalize it \citep{Schubert2017}. An intuitive explanation of t-SNE's response to outliers can be made based on the asymmetry of the input and output affinity matrices of t-SNE. Roughly speaking, the input affinity behaves like a normalized, symmetrized nearest neighbor graph, whereas the output affinity behaves more like a radius neighbors graph. This means the output affinity is optimized to represent the outlier point in close proximity with at least some points, even if it was extremely far from those points in the input. 

To begin to formalize this observation, we provide a geometric definition of an outlier.

\begin{definition}\label{def:alpha_outlier}
Fix \(X \subset \mathbb{R}^D\) 
and \(\alpha \in \mathbb{R}_{\geq 0}\). We say \(X\) is an \textbf{\((\alpha, x_0)\)-outlier configuration} if there exists a hyperplane separating \(x_0\) and \(X\setminus \{x_0\}\) 
with margin width at least
   \begin{equation*}\label{eq:alpha_margin}
       \alpha \cdot \max\{1, \diam(X\setminus\{x_0\}) \}.
   \end{equation*}
 Define the \textbf{outlier number} of a dataset
 , denoted \(\alpha(X)\), as the largest \(\alpha\) for which there exists \(x_0 \in X\) such that \(X\) is an \((\alpha, x_0)\)-outlier configuration.
\end{definition}

This definition can be generalized to accommodate more than one outlier. Note that \(\alpha\) is defined with respect to a \textit{thresholded} data diameter\footnote{The choice of a fixed threshold is arbitrary and does not meaningfully affect the results.}, i.e.\ \(\max\{1,\diam(X\setminus\{x_0\})\}\), to maintain a suitable notion of an outlier even in the extreme case of \(\diam(X\setminus\{x_0\}) = 0\). 


Our main theorem establishes that any stationary t-SNE output, \textit{regardless of its input}, is incapable of depicting extreme outliers. 

\begin{figure}[t!]
    \centering
\includegraphics[width=\linewidth]{./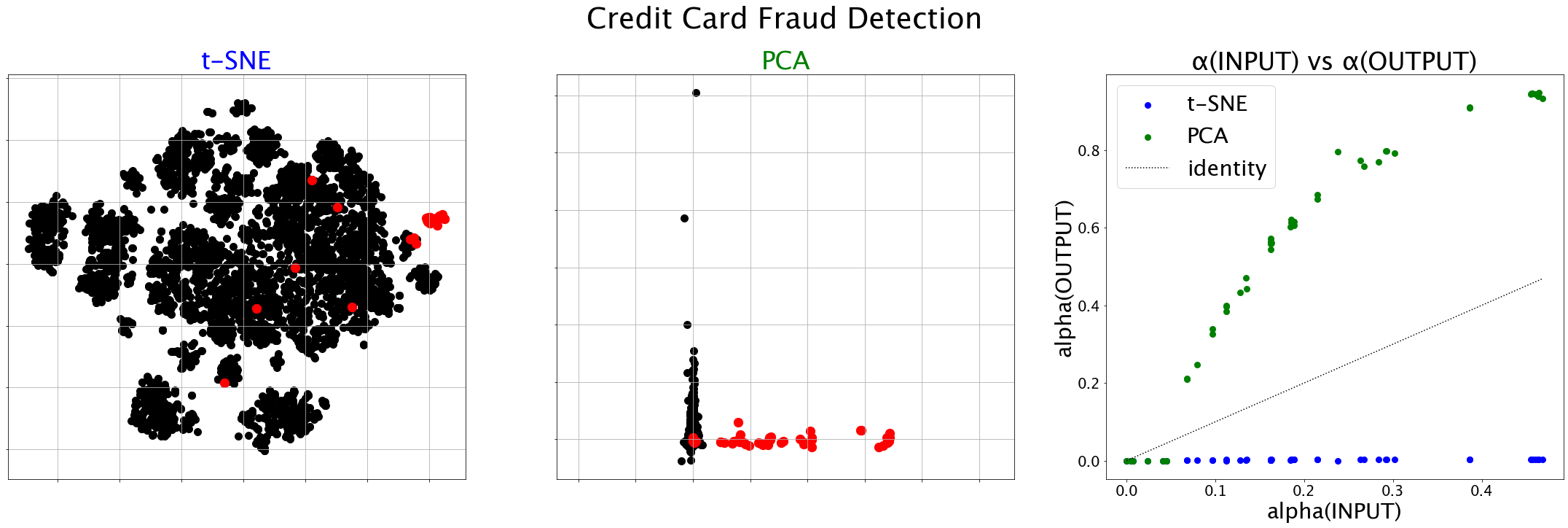}
          \includegraphics[width=\linewidth]{./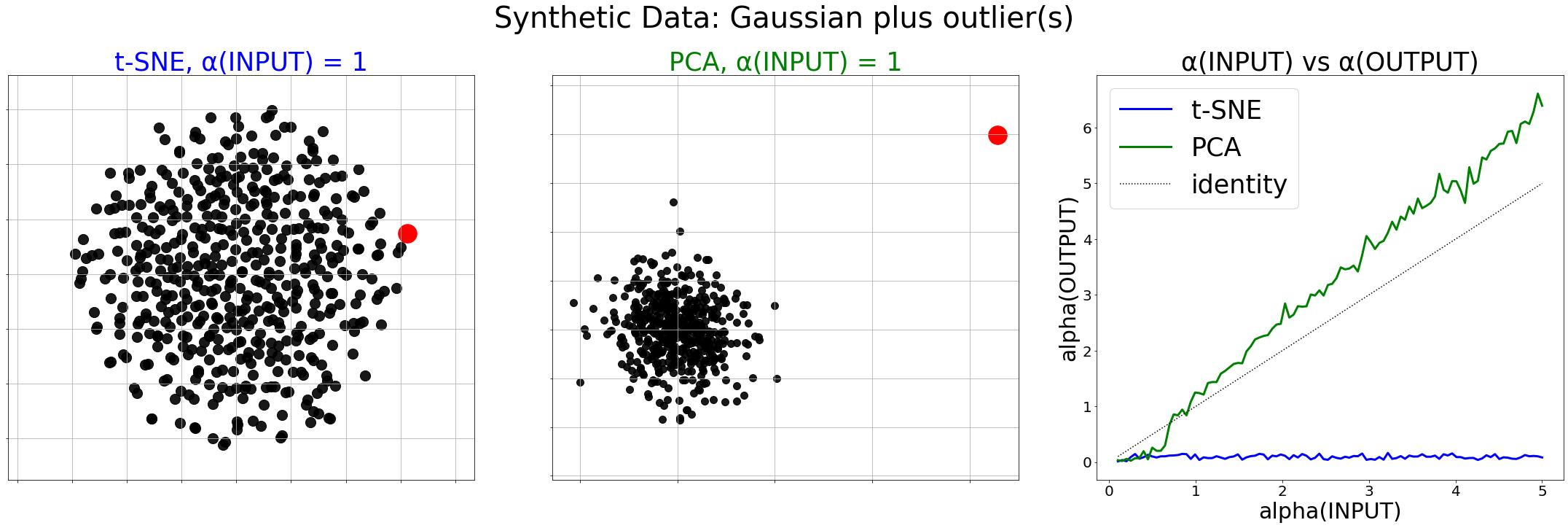}
\includegraphics[width=\linewidth]{./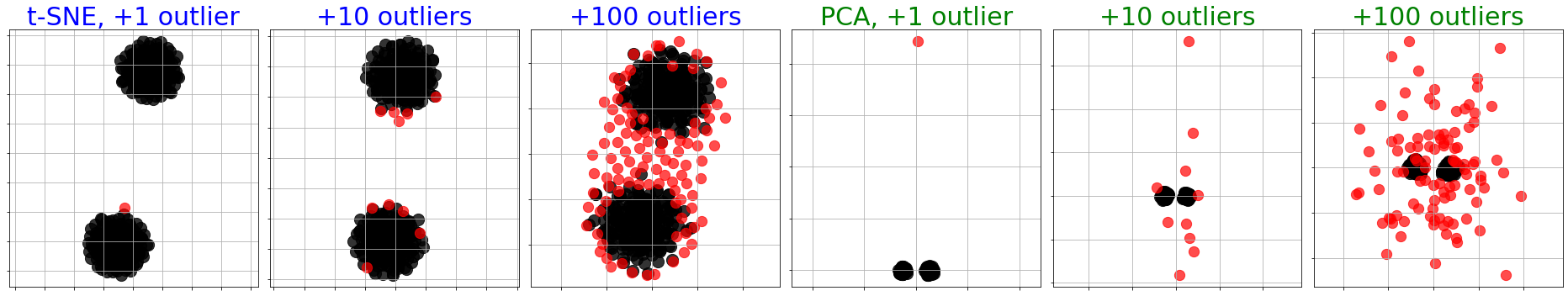}
    \caption{
    \small
    t-SNE's response to \(\alpha\)-outliers, compared with PCA. Top row: given data monitoring financial activity (\(n=5050\), \(D=30\)) where one percent of users are committing fraud, PCA succeeds and t-SNE fails at representing the fraudulent users as outliers. Note that all of the fraudulent users register as \((\alpha > 0)\)-outliers with respect to the regular users; in the top right we show how t-SNE and PCA represent those \(\alpha\)-values in their output. Middle row: a similar analysis on a synthetic dataset comprised of a Gaussian sample plus an outlier. Bottom row: mixture of two Gaussians plus 1, 10, and 100 outliers. t-SNE shows the outliers are essentially part of the cluster structure, while for PCA the outliers overtake the structure of the embedding. 
    } 
    \label{fig:outliers1}
\end{figure}

\begin{restatable}{theorem}{OutlierAbs} 
\label{thm:outlier_abs}
Fix \(n > 2\) 
and \(\rho \in [1,n-1]\). Let \(Y \in {\IMTSNE} \) be \emph{any} stationary t-SNE embedding. Then we have:
\[\alpha(Y) \leq 
3.266 + o_n(1).\]

\end{restatable}
The result is proven via analysis of the t-SNE gradient: we argue that if the outlier is too far away, its gradient is nonzero, thus violating stationarity. Key to this analysis is a comparison between the aggregate behavior of the outlier point's affinities in the input versus the output; in other words, the comparison between \(\sum_{i=1}^n P_{i0}\) and \(\sum_{i=1}^n Q_{i0}\). This is where the fundamental asymmetry of t-SNE comes in. While the latter is dependent on the position of the outlier point \(y_0\), per Lemma \ref{lem:Qsum_bound}, the former has a lower bound of \(1/(2n)\) due to the normalization of the conditional affinity probabilities. 

The input-agnostic nature of this result is striking: even if the input is an extreme outlier configuration, no stationary t-SNE output can depict its extremity past roughly \(\alpha \approx 3.6\). This behavior stands in stark contrast to that of principal component analysis (PCA), as shown in Figure \ref{fig:outliers1} on both real and synthetic data models. PCA tends to roughly preserve the \(\alpha\) outlier number, while t-SNE seldom depicts outliers past \(\alpha > 0.2\) in practice, and sometimes even depicts them as within the convex hull of the rest of the points (yielding \(\alpha = 0\)). Furthermore, when faced with multiple outliers, (Figure \ref{fig:outliers1}, bottom) t-SNE gracefully accommodates them into the global structure of the bulk of the data, in contrast to PCA, which depicts the outliers as randomly scattered. 

Our result suggests that t-SNE is the wrong tool to use in situations involving outlier detection. Consider, for instance, a dataset of financial transactions where the goal is to detect fraudulent users, studied by \citet{dalpozzolo2015_calib}. In this dataset, only \(0.172\)\% percent of the points ($492$ out of $284,807$) are fraudulent and by many standard statistical metrics register as outliers. Comparing the t-SNE and PCA plots on a  random representative subset of this data ($5050$ points, of which $50$ are fraudulent), we see that t-SNE mixes the frauds with the bulk of the points 
while PCA keeps them separated for the most part
, see Figure \ref{fig:outliers1}, top row.

Finally, note the distinction between t-SNE's muted response to outliers and its dramatic sensitivity to poison points. We illustrate this distinction on a dataset of BBC news articles which cluster into five topics \citep{bbc_dataset}, see Figure \ref{fig:outliers_real_world}. Given RoBERTa \citep{liu2019roberta} sentence embeddings of these articles (\(n=2225, D=1024\)), we find that injecting $220$ poison points, i.e.\ roughly 10\% (see Appendix \ref{sec:appendix_outliers_experiment} for the explicit construction) can halve the silhouette score of the t-SNE embedding with respect to the ground-truth labelling, whereas injecting $1100$, i.e.\ roughly 50\%, $\alpha$-outliers for $\alpha \in (0.8,1)$ does not degrade the cluster structure. 


\begin{figure}
    \centering
\includegraphics[width=\linewidth]{./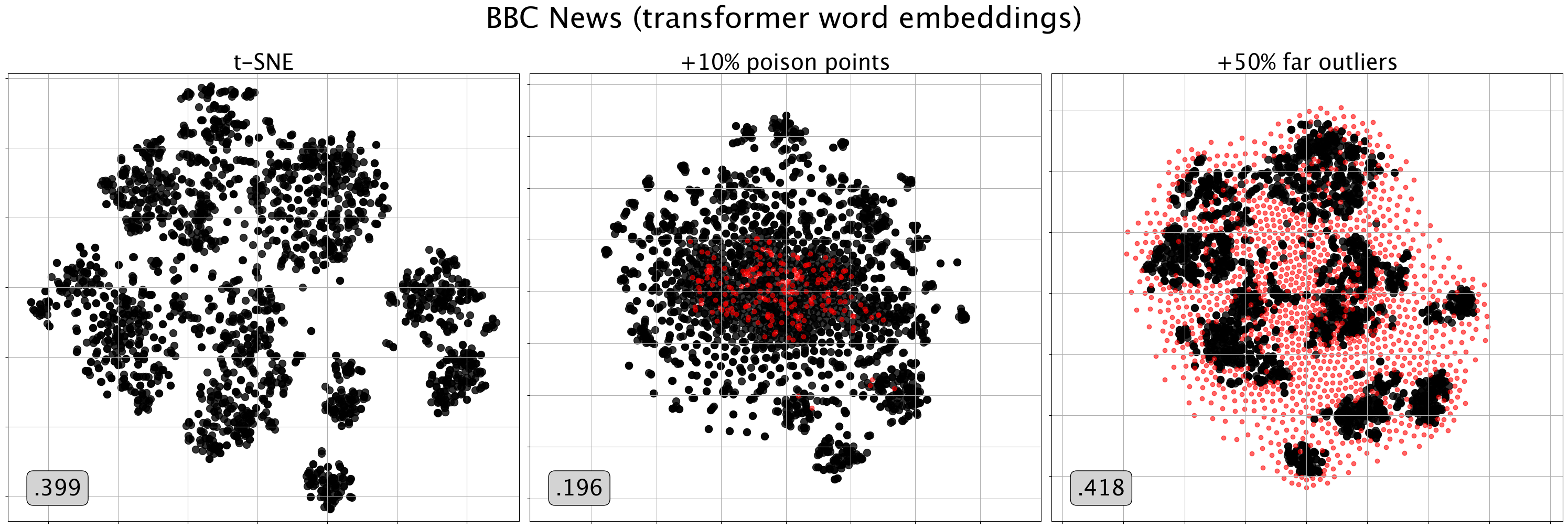}
    \caption{    \small
t-SNE's susceptibility to poison points in contrast with its muted response to outliers, on the BBC news article classification dataset. The bottom left label denotes silhouette score of the original points (without the injected points) with respect to the true labels (business, entertainment, politics, sport, tech).   
}
    \label{fig:outliers_real_world}
\end{figure}

\section{Discussion}
Our 
study of t-SNE has established in considerable generality that one cannot infer the \textit{degree} of cluster salience or the extremity of outliers from a t-SNE plot, see Theorems \ref{thm:unclustHammer}, \ref{thm:perturbhammer}, and \ref{thm:outlier_abs}. The proofs and intuitions behind these statements guided us to surprising empirical observations that one cannot even infer the \textit{existence} of clusters or outliers in some cases. In particular, the injection of a small subset of adversarially chosen points can largely mask the cluster structure, while sizable injections of outlier points are reliably masked within the cluster structure, 
see Figures \ref{fig:one_pt_perturb}, \ref{fig:outliers1}, \ref{fig:outliers_real_world}, and \ref{fig:bbc_appendix}. 


We have identified two properties of t-SNE that give rise to these idiosyncratic behaviors: (1) additive invariance with respect to the squared interpoint distances, and (2) the asymmetry between the input and output affinity matrices. 
 While we have uncovered significant failure modes that arise from these properties, we cannot completely rule out their utility. {For instance, though additive invariance may lead to exaggerated clusters, prior work has indicated that it may be useful in filtering out high-dimensional noise that may be present in the input, see e.g.\ Figure \ref{fig:higher_dim_tighter_clusters}, and \citet{lee2011, lee2014, Karoui2010, Kaouri2015, Landa2023}}.

 



t-SNE belongs to a wide selection of data visualization techniques that are yet to be understood fully \citep{mcinnes2018umap, jacomy2014forceatlas2, tang2016visualizing, amid2019trimap}. {Our preliminary experiments (see Appendices \ref{app:umap1} and \ref{app:umap2}) suggest that the failure modes discussed in this paper may extend to other force-based dimension reduction techniques.} Our hope is that this work inspires the reader to explore this fascinating landscape further and pursue the essential question: what can be provably deduced from a visualization?

\section*{Acknowledgments}

The authors would like to thank the anonymous reviewer for their helpful feedback and suggesting improvements to the manuscript. N.B. was supported by the ONR grant N0001424SB001 and The Herbert and Florence Irving Institute for Cancer Dynamics during the writing of this paper.



\bibliography{iclr2026_conference}
\bibliographystyle{iclr2026_conference}

\newpage
\appendix
\section{Appendix: Misrepresentation of Cluster Salience}\label{sec:appendix_false_pos}

\subsection{Effects of PCA Pre-processing}
It is common practice to pre-process a t-SNE input by reducing the dimension with PCA \citep{kobak2019art}. In light of the results of Section \ref{sec:mis_clusters}, it is natural to ask how this PCA pre-processing step plays with t-SNE's additive invariance. Figures \ref{fig:PCA_preprocessing} and \ref{fig:PCA_preprocessing_example} provide evidence that, even if the input is pre-processed with PCA, there exist a sequence of datasets with similar visualizations but vastly different input cluster strengths as measured via silhouette score and aspect ratio. 


In both figures, the dataset at hand is $1000$ points sampled from a mixture of two Gaussians in $100$ dimensions. Algorithm \ref{alg:impostor} is applied at various strengths to vary the silhouette score and aspect ratio.

\begin{figure}[h!]
    \centering
    \includegraphics[width=0.6\linewidth]{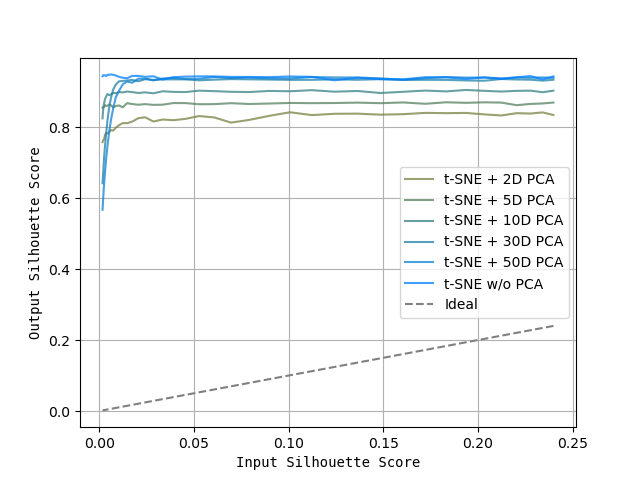}
    \caption{\small Effect of PCA Pre-processing on Silhouette Score: input versus t-SNE output silhouette score using various levels of PCA pre-processing. 
}
    \label{fig:PCA_preprocessing}
\end{figure}

\begin{figure}[h!]
    \centering
    \includegraphics[width=0.95\linewidth]{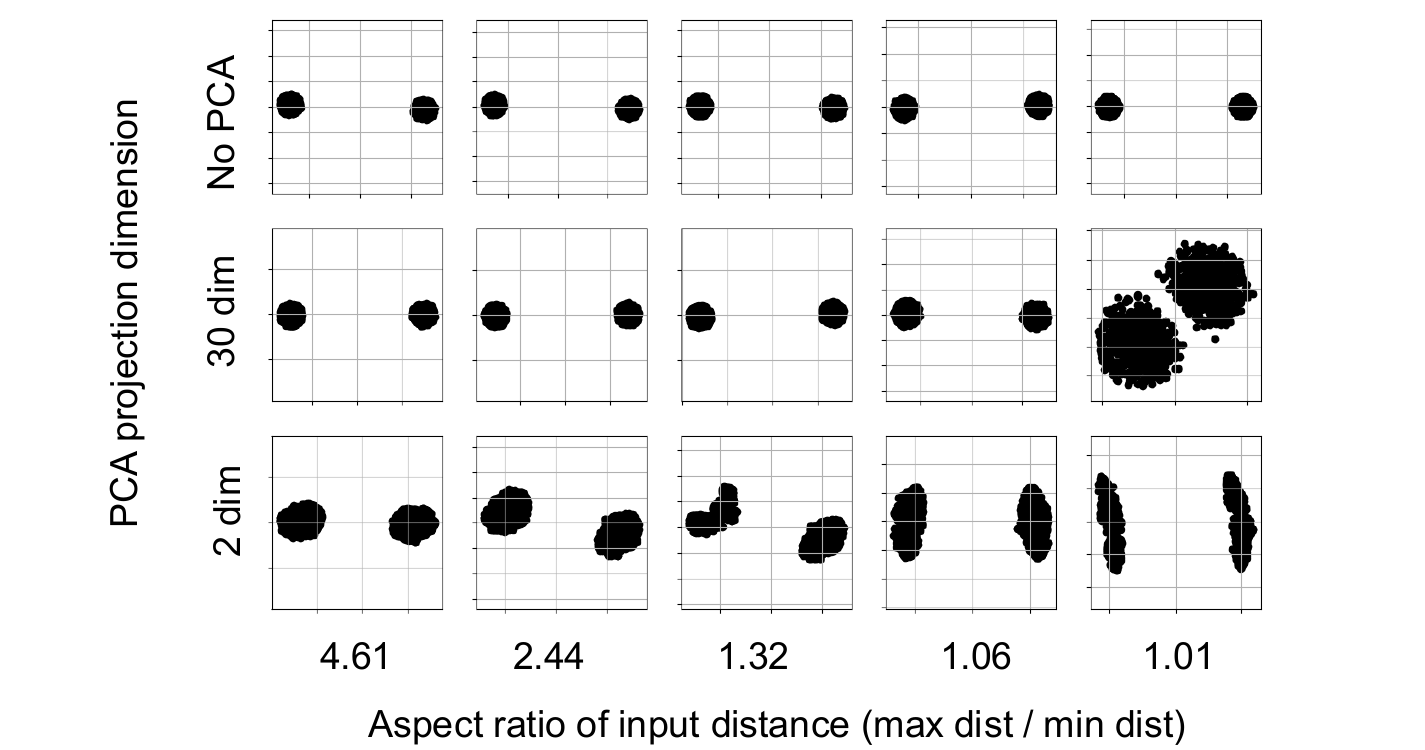}
    \caption{\small Effect of PCA pre-processing on visualization. Note that without PCA, t-SNE is perfectly additively invariant, whereas this is not the case with PCA pre-processing.} \label{fig:PCA_preprocessing_example}
    
\end{figure}

\subsection{Additional Experiments}

In Figure \ref{fig:higher_dim_tighter_clusters}, we plot a sample from a mixture of two Gaussians in 250, 500, 1000, 2000, and 4000 dimensions. Notice that as the dimension of the Gaussian increases, the interpoint distance matrix of the input points (bottom) approaches a simplex but the t-SNE corresponding visualization (top) remains qualitatively unchanged. 

\begin{figure}[h!]
    \centering
    \includegraphics[width=\linewidth]{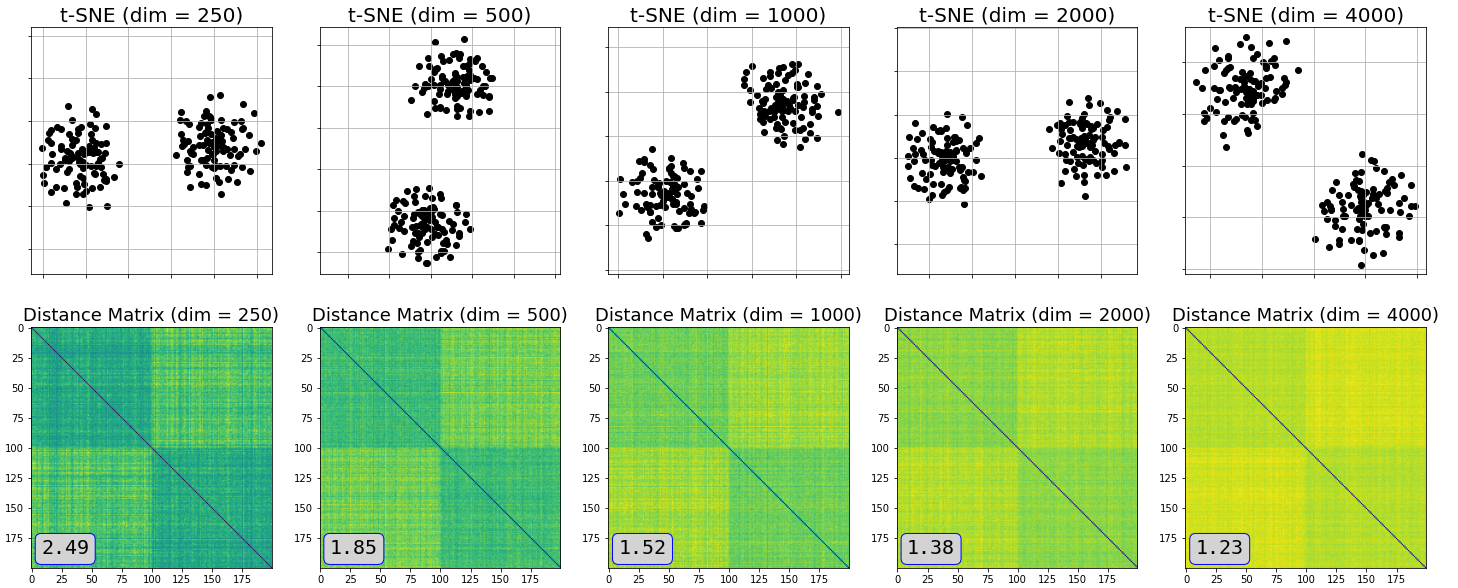}
    \caption{\small t-SNE's interplay with the concentration of measure phenomenon. The bottom left number in the plots show the aspect ratio (i.e.\ largest interpoint distance to the smallest interpoint distance).
}\label{fig:higher_dim_tighter_clusters}
\end{figure}

Figure \ref{fig:poison_point_scaling} below shows that the effects of adding a poison point persists even with larger Gaussian clustered datasets. 

\begin{figure}[h!]
    \centering
    \includegraphics[width=0.9\linewidth]{images/outlier_figs/one_pt_perturb.png}
\includegraphics[width=0.9\linewidth]{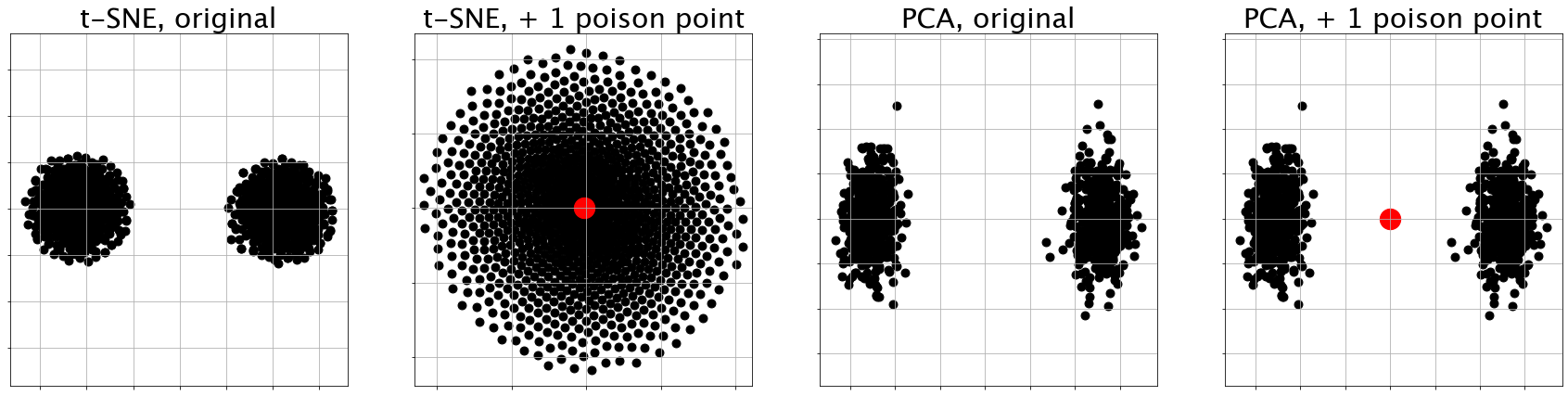}
\includegraphics[width=0.9\linewidth]{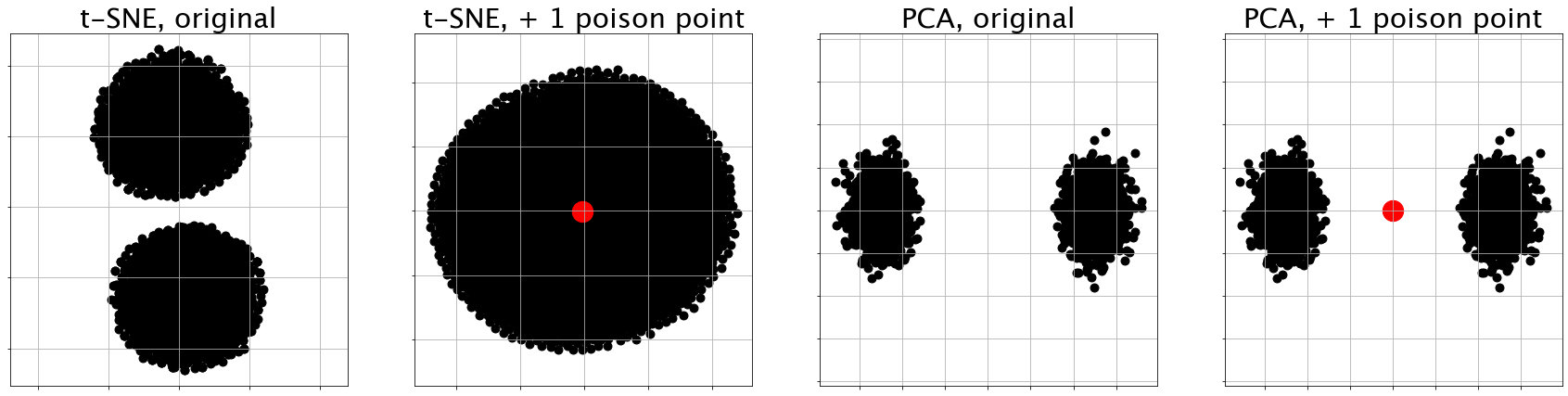}

    \caption{\small Top row: 400 total points, Middle row: 1000 total points, Bottom Row: 5000 total points}
    \label{fig:poison_point_scaling}
\end{figure}

\newpage 

Figure \ref{fig:poison_point_thm} provides an experimental demonstration of Theorem \ref{thm:poison_point}. 

\begin{figure}[h!]
    \centering
\includegraphics[width=0.97\linewidth]{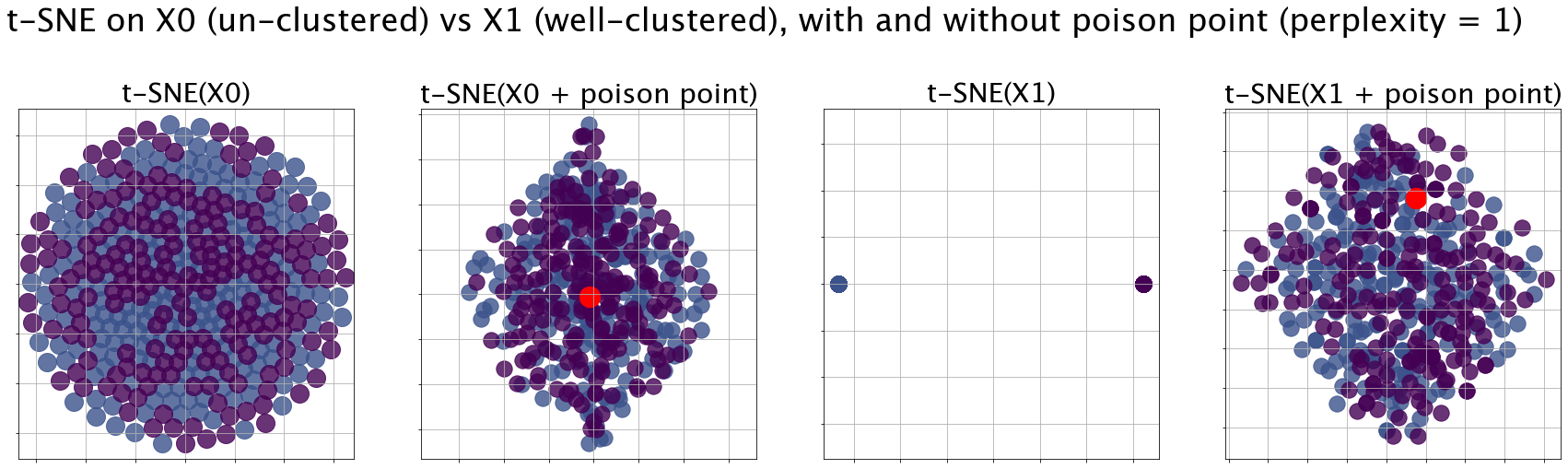}
    \caption{{\small An experimental demonstration of Theorem \ref{thm:poison_point}. This shows that \(X_0\), the un-clustered input (with silhouette score of \(0\)), has an un-clustered t-SNE output (first panel), while \(X_1\), the clustered input (with silhouette score around \(0.422\)), has a very clustered t-SNE output (third panel). However, with the insertion of the poison point, the t-SNE outputs are similarly un-clustered (second and fourth panels).}}
    \label{fig:poison_point_thm}
\end{figure}



\subsection{Empirical Comparison with UMAP}\label{app:umap1}
It is natural to ask whether related methods like UMAP suffer from the same failure modes that we point out for t-SNE in this work. 

In Figure \ref{fig:single_cell_umap}, we see how UMAP's output visualization is qualitatively unchanged between the given single cell dataset and its near-simplex impostor. However, unlike t-SNE, it does not display exact additive invariance, as the positioning of clusters changes. 
In Figure \ref{fig:amongus_umap}, we observe that UMAP, like t-SNE, displays extreme instability near the unit simplex.
In Figure \ref{fig:poison_umap}, we observe that UMAP's response to a single poison point is not as pronounced as that of t-SNE, but the injection of multiple poison points can still change the shape of the clusters significantly.



\begin{figure}[h!]
    \centering
    \includegraphics[width=\linewidth]{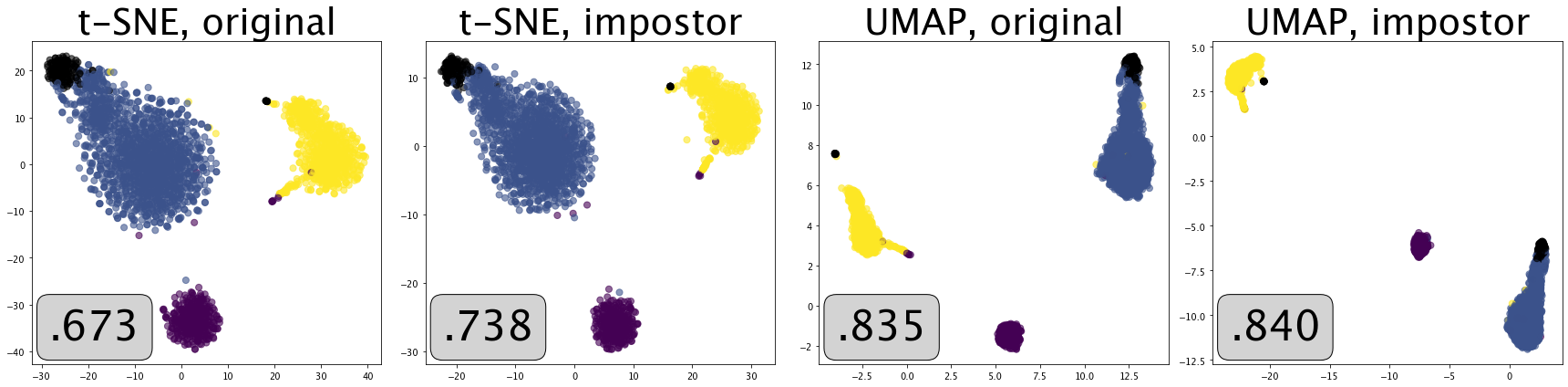}
    \vspace{-0.1in}
    \caption{UMAP on a real single-cell dataset versus (near simplex) impostor. The datasets and cluster partitions are those used in Figure \ref{fig:single_cell_false_pos}. Note that the bottom left number in each panel is the silhouette score of the embedding with respect to the clustering.}
    \label{fig:single_cell_umap}
\end{figure}

\begin{figure}[h!]
    \centering
    \includegraphics[width=\linewidth]{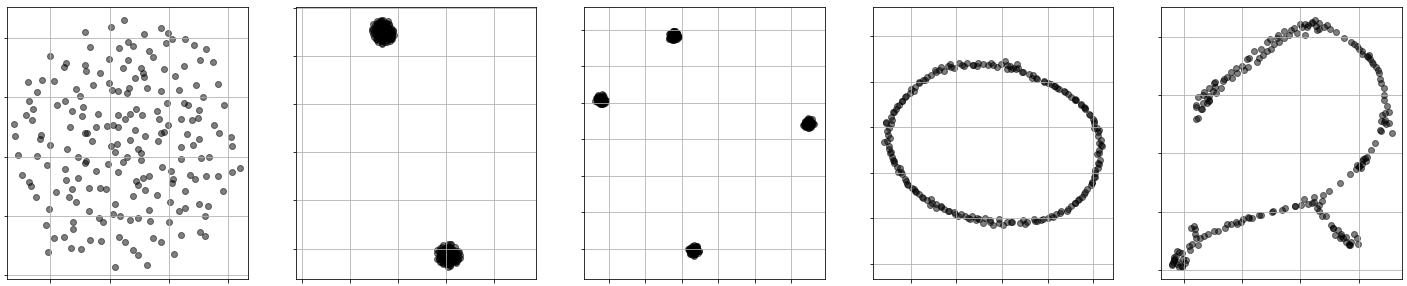}
        \vspace{-0.2in}
    \caption{\small Various different 2D UMAP visualizations produced by small adversarial perturbations of a 200-point unit regular simplex. These are the same datasets visualized in Figure \ref{fig:sameIn_diffOut}.}
    \label{fig:amongus_umap}
\end{figure}

\begin{figure}[h!]
    \centering
    \includegraphics[width=0.9\linewidth]{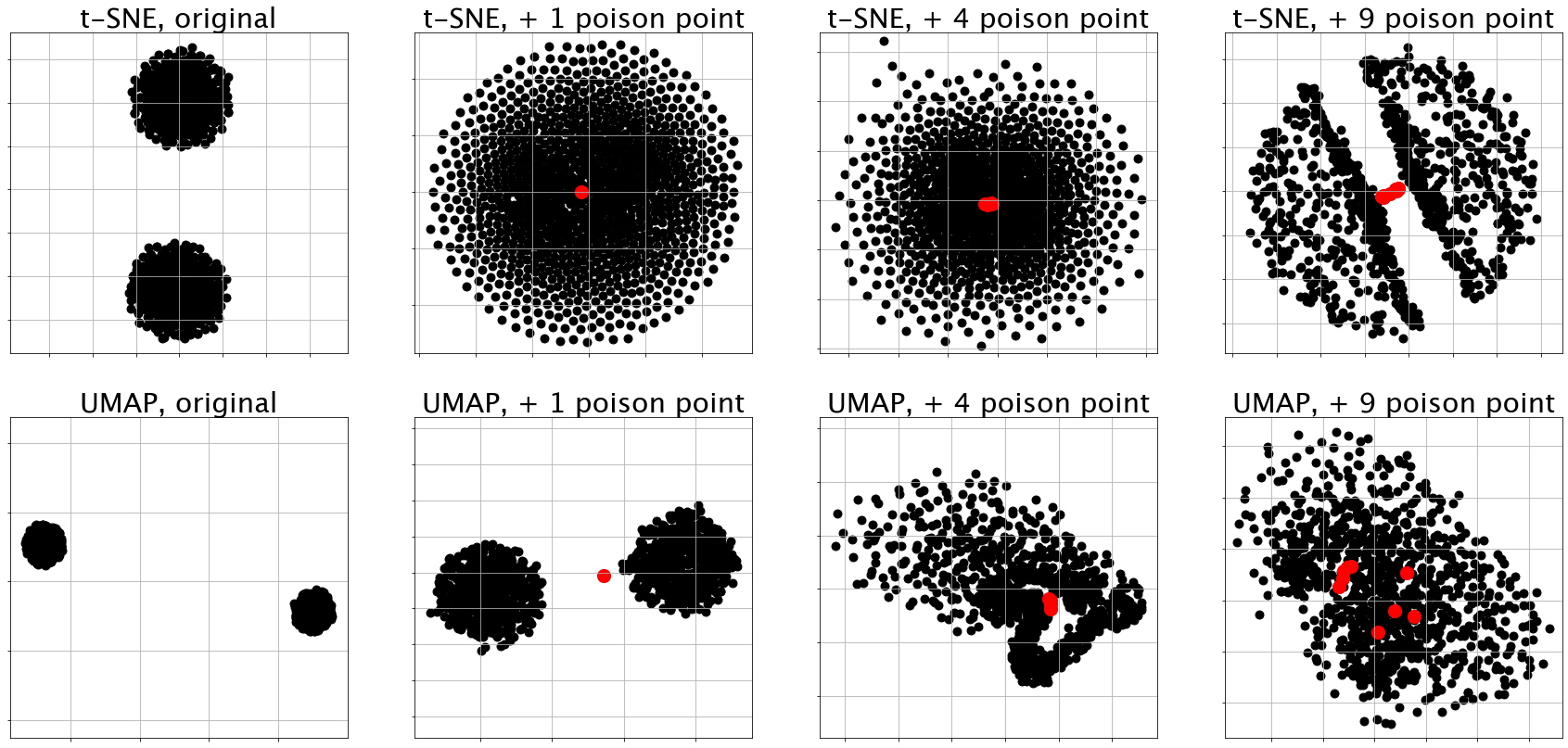}
    \caption{{\small A comparison of t-SNE and UMAP's response to poison points, the original input being $1000$ points sampled from a mixture of two Gaussians in $2000$ dimensions. Notably, t-SNE, unlike UMAP, seems to have the property that even for very large datasets, a single poison point can destroy the cluster structure. UMAP generally has quite diverse responses to poison points. 
    }}
    \label{fig:poison_umap}
\end{figure}


\pagebreak
\subsection{Calinski-Harabasz Index}\label{sec:CHindex}
For an $n$-point dataset $X = \{x_1, \dots, x_n\} \subset \mathbb{R}^{n-1}$ and a partition of the dataset into non-empty clusters $C_1 \sqcup C_2 \sqcup \dots \sqcup C_k = [n]$ with $n > k > 1$, the Calinski-Harabasz Index is defined as the ratio of the distance between cluster centers to the  internal distance to a cluster's center. Let $E$ be the function from $S \subseteq [n]$ to $\mathbb{R}^{n-1}$ such that:
$$E(S) = \frac{1}{|S|}\sum_{i \in S} x_i.$$
Then the Calinski-Harabasz Index is\footnote{If the denominator and numerator are $0$, then $\text{CH}(X; C_{m \in [k]}) := 1$. If only the denominator is 0, then $\text{CH}(X; C_{m \in [k]}) := \infty$.}:
$$\text{CH}(X; C_{m \in [k]}) = \frac{\frac{1}{k-1}\sum_{m \in [k]} |C_m|\cdot \lVert E(C_m) - E([n])\rVert^2}{\frac{1}{n-k}\sum_{m \in [k]} \sum_{i \in C_m} \lVert x_i - E(C_m)\rVert^2}.$$

It ranges from $0$ to $\infty$ with a score of $\infty$ being assigned to perfectly clustered data, 1 to unclustered data and 0 to incorrectly clustered data. 

Now we provide an analogue to Theorem \ref{thm:unclustHammer} with respect to the Calinski-Harabasz Index:

\begin{theorem}
\label{thm:unclustHammerCH}
    Fix any $n > k > 1$, and $n$-point dataset ${X} \subset \mathbb{R}^{n-1}$ with partition $C_1 \sqcup \cdots \sqcup C_k = [n]$ such that $\textup{CH}({X}; C_{m\in[k]}) > 1$. For all $1 < \epsilon \leq \textup{CH}({X}; C_{m\in[k]})$, there exists $n$-point dataset ${X}_\epsilon \subset \mathbb{R}^{n-1}$ such that 
    $$\textup{CH}({X}_\epsilon; C_{m\in[k]}) = \epsilon, $$
    yet, for any $\rho \in [1, n-1]$:
    $${\TSNE}_\rho({X}) = {\TSNE}_\rho({X}_\epsilon).$$
\end{theorem}

\begin{corollary}
\label{cor:twoCluster_n_UnclusteredCH}
    For all $n \geq 4$ even, and partition $C_1 \sqcup C_2 = [n]$ such that $|C_1|=|C_2|=\frac{n}{2}$. There exist a sequence of $n$-point datasets in $\mathbb{R}^{n-1}$, $\{ {X}_\epsilon\}_{1 < \epsilon \leq \infty}$, with 
        $$\textup{CH}({X}_\epsilon; C_1, C_2) = \epsilon$$
    such that for any $\rho \in [1, n-1]$, $\bigcap_{1<\epsilon \leq \infty}\text{t-SNE}_\rho({X}_\epsilon)$ contains $n$-point dataset ${Y} \subseteq \mathbb{R}^2$ with
    $$\textup{CH}(Y; C_1,C_2) = \infty.$$ 
\end{corollary}

\begin{proof}[\textbf{Proof of Theorem \ref{thm:unclustHammerCH}}]
    First, let us assume that $\text{CH}(g(C); C_{m \in [k]}) < \infty$. Let $g$ be the function from Corollary \ref{cor:g_exists}, and $f(C) = \text{CH}(g(C); C_{m \in [k]})$. Note that $f$ is continuous whenever the denominator of $\text{CH}(\ \cdot\ ; C_{m \in [k]})$ is non-zero which is always the case for $C \in [0,1]$. Therefore, the image of $f$ on $[0,1)$ contains the interval $(f(1), f(0)] = (1, \text{CH}(X ; C_{m \in [k]})]$. Thus, for all $\epsilon \in (1, \text{CH}(X ; C_{m \in [k]})]$, there exists $C \in [0,1)$ such that $X_\epsilon = g(C)$ satisfies the hypothesis.

    If $\text{CH}(g(C); C_{m \in [k]}) = \infty,$ then $f$ is continuous on $(0,1)$ only. Thus for all ${\epsilon \in (1, \text{CH}(X ; C_{m \in [k]}))}$, there exists $C \in (0,1)$ such that $X_\epsilon = g(C)$ satisfies the hypothesis and for $\epsilon =\text{CH}(X ; C_{m \in [k]}))$, $X_\epsilon = X$ satisfies the hypothesis.
\end{proof}

For proof of Corollary \ref{cor:twoCluster_n_UnclusteredCH} see Appendix \ref{sec:proofs_clust}.

\subsection{Dunn Index}
\label{sec:Dunnindex}
For an $n$-point dataset $X = \{x_1, \dots, x_n\} \subset \mathbb{R}^{n-1}$ and a partition of the dataset into clusters $C_1 \sqcup C_2 \sqcup \dots \sqcup C_k = [n]$ with $|C_{m\in[k]}| > 1$, the Dunn index measures the ratio between the minimum inter-cluster distance and maximum intra-cluster distance. Specifically, the Dunn index is given by the expression\footnote{If the denominator and numerator are $0$, then $\text{DI}(X; C_{m \in [k]}) := 1$. If only the denominator is 0, then $\text{DI}(X; C_{m \in [k]}) := \infty$.}
$$\text{DI}(X; C_{m \in [k]}) = \frac{\min_{m,l \in [k], m\neq l, i \in C_m, j \in C_l} \lVert x_i - x_j \rVert}{\max_{m \in [k], i,j \in C_m} \lVert x_i - x_j \rVert}.$$

It ranges from $0$ to $\infty$ with a score of $0$ being assigned to incorrectly clustered data, $1$ to unclustered data, and $\infty$ to perfectly clustered data.

Now we restate Theorem \ref{thm:unclustHammer} with respect to the Dunn Index:

\begin{theorem}
\label{thm:unclustHammerDunn}
    Fix any $n > k > 1$, and $n$-point dataset ${X} \subset \mathbb{R}^{n-1}$ with partition $C_1 \sqcup \cdots \sqcup C_k = [n]$ such that $|C_{m\in[k]}| > 1$ and $\textup{DI}({X}; C_{m\in[k]}) > 1$. For all $1 < \epsilon \leq \textup{DI}({X}_\epsilon; C_{m\in[k]})$, there exists $n$-point dataset ${X}_\epsilon \subset \mathbb{R}^{n-1}$ such that 
    $$\textup{DI}({X}_\epsilon; C_{m\in[k]}) = \epsilon, $$
    yet, for any $\rho \in [1, n-1]$:
    $${\TSNE}_\rho({X}) = {\TSNE}_\rho({X}_\epsilon).$$
\end{theorem}

\begin{corollary}
\label{cor:twoCluster_n_UnclusteredDunn}
    For all $n \geq 4$ even, and partition $C_1 \sqcup C_2 = [n]$ such that $|C_1|=|C_2|=\frac{n}{2}$. There exist a sequence of $n$-point datasets in $\mathbb{R}^{n-1}$, $\{ {X}_\epsilon\}_{1 < \epsilon \leq \infty}$, with 
        $$\textup{DI}({X}_\epsilon; C_1, C_2) = \epsilon$$
    such that for any $\rho \in [1, n-1]$, $\bigcap_{1<\epsilon \leq \infty}\text{t-SNE}_\rho({X}_\epsilon)$ contains $n$-point dataset ${Y} \subseteq \mathbb{R}^2$ with
    $$\textup{DI}(Y; C_1,C_2) = \infty.$$ 
\end{corollary}

\begin{proof}[\textbf{Proof of Theorem \ref{thm:unclustHammerDunn}}]
    Let $g$ be the function from Corollary \ref{cor:g_exists}, and $f(C) = \text{DI}(g(C); C_{m \in [k]}).$ Fix $i,j \in [n]$ such that:
    $$\min_{m,l \in [k], m\neq l, i' \in C_m, j' \in C_l} \lVert x_{i'} - x_{j'} \rVert = \lVert x_i - x_j \rVert,$$
    and $t,r \in [n]$ such that:
    $$\max_{m \in [k], i',j' \in C_m} \lVert x_{i'} - x_{j'} \rVert = \lVert x_{r} - x_{t} \rVert,$$
    Then:
    $$f(C) = \frac{ \sqrt{(1-C)\cdot\lVert x_i - x_j \rVert+C}}{ \sqrt{(1-C)\cdot\lVert x_r - x_t \rVert+C}}.$$
    Thus the image of $f$ on $[0,1)$ is $(f(0), f(1)] = (1, \text{DI}(X; C_{m \in [k]})]$. Therefore, for all $\epsilon \in (1,\text{DI}(X; C_{m \in [k]})]$, there exists $C \in [0,1)$ such that $X_\epsilon = g(C)$ satisfies the hypothesis.
\end{proof}

For proof of Corollary \ref{cor:twoCluster_n_UnclusteredDunn} see Appendix \ref{sec:proofs_clust}.

\subsection{Omitted Proofs}\label{sec:proofs_clust}

The main effort of this section will be to prove Lemma \ref{lem:image_of_tSNE}, which morally gives us Theorem \ref{thm:unclustHammer} and Theorem \ref{thm:perturbhammer}. In order to do this, we first introduce a number of technical lemmas which collectively show that t-SNE is invariant under additive and multiplicative scaling of the input.

\begin{lemma}\label{lem:unique_neighborhood} Let \(H(\cdot)\) denote the entropy function. For any $n>2$, \(X = \{x_1,\ldots,x_n\}\subset \mathbb{R}^{n-1}\) and \(\rho \in (1,n-1)\), there is a unique \(\sigma_i \geq 0\) that minimizes
\[\Big|  H(P_{\cdot |i}(X; \sigma_i) ) - \log_2 \rho \Big|.\]
\end{lemma}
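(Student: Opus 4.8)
The plan is to prove that the map $\sigma_i \mapsto H\!\left(P_{\cdot|i}(X;\sigma_i)\right)$ is continuous and strictly increasing on $[0,\infty)$ (as long as the squared distances from $x_i$ are not all equal), with computable boundary values, and then read off existence and uniqueness of the gap-minimizer directly from this monotonicity. Fix the index $i$ and set $d_j := \lVert x_i - x_j\rVert^2$ for $j\neq i$. It is cleanest to reparametrize by the inverse temperature $\beta := 1/(2\sigma_i^2)$, which is a strictly decreasing bijection from $(0,\infty)$ onto $(0,\infty)$ with $\sigma_i\to 0$ corresponding to $\beta\to\infty$; in these terms $P_{\cdot|i}$ is exactly the Gibbs distribution $P_j(\beta) = e^{-\beta d_j}/Z(\beta)$ with $Z(\beta) = \sum_{k\neq i} e^{-\beta d_k}$.

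The technical heart is the derivative of the entropy. Working in nats (which only rescales $H$ by the positive constant $\ln 2$ and so does not affect monotonicity), the identity $\ln P_j = -\beta d_j - \ln Z$ gives $H(\beta) = \beta\,\E_\beta[d] + \ln Z(\beta)$. Differentiating, and using $\tfrac{d}{d\beta}\ln Z = -\E_\beta[d]$ together with $\tfrac{dP_j}{d\beta} = P_j(\E_\beta[d]-d_j)$ (hence $\tfrac{d}{d\beta}\E_\beta[d] = -\Var_\beta(d)$), the first-order terms cancel and
\[
\frac{dH}{d\beta} = -\beta\,\Var_\beta(d)\le 0,
\]
with strict inequality precisely when the $d_j$ ($j\neq i$) are not all equal. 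Thus $H$ is strictly decreasing in $\beta$, i.e.\ strictly increasing in $\sigma_i$, on $(0,\infty)$; it is smooth there since $Z>0$, and it extends continuously to $\sigma_i=0$ because $P_{\cdot|i}(X;\sigma_i)$ converges as $\sigma_i\to 0$ to the limiting distribution $P_{\cdot|i}(X;0)$, which is uniform on the $m\ge 1$ nearest neighbours of $x_i$, and entropy is continuous on the simplex. Evaluating the ends, $H\to \log_2(n-1)$ as $\sigma_i\to\infty$ (the distribution becomes uniform over all $n-1$ points) and $H\to \log_2 m$ as $\sigma_i\to 0$.

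To conclude: since $\rho\in(1,n-1)$ we have $\log_2\rho\in(0,\log_2(n-1))$. If $\log_2\rho\le \log_2 m$, then $\lvert H(\sigma_i)-\log_2\rho\rvert = H(\sigma_i)-\log_2\rho$ is strictly increasing, so its unique minimizer is $\sigma_i=0$. If instead $\log_2 m < \log_2\rho < \log_2(n-1)$, then $\log_2\rho$ lies in the image of the strictly increasing continuous function $H$, so by the intermediate value theorem there is exactly one $\sigma_i^{\star}$ with $H(\sigma_i^{\star})=\log_2\rho$, and it is the unique minimizer since the gap vanishes there and is positive elsewhere by strict monotonicity. The one configuration escaping this dichotomy is the fully degenerate one in which every $d_j$ ($j\neq i$) coincides: there $H\equiv\log_2(n-1)$ is constant, the gap is the nonzero constant $\log_2(n-1)-\log_2\rho$, and every $\sigma_i$ minimizes it. I expect this degenerate case to be the only genuinely delicate point, to be dispatched either by a genericity assumption on $X$ or by adopting the tie-breaking convention $\sigma_i^{\star}:=0$; everything else is the routine Gibbs-entropy computation above.
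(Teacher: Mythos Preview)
Your argument is correct and follows essentially the same route as the paper: continuity and strict monotonicity of $\sigma_i\mapsto H(P_{\cdot|i}(X;\sigma_i))$ together with the boundary values, then the intermediate value theorem. The paper simply cites Lemma~4.2 of \citet{jeong2024convergence} for the monotonicity and asserts $H(P_{\cdot|i}(X;0))\in(0,\log_2(n-1))$, whereas you supply the standard Gibbs-entropy computation $\tfrac{dH}{d\beta}=-\beta\,\Var_\beta(d)$ and correctly identify the degenerate all-equidistant case (which the paper's proof glosses over).
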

\begin{proof}
   This follows easily from the fact that \(H(P_{\cdot |i}(X; \sigma))\) is a continuous, strictly increasing function of \(\sigma\) (see e.g.\ Lemma 4.2 of \citet{jeong2024convergence}), where \(\lim_{\sigma\to \infty}H(P_{\cdot |i}(X; \sigma)) = 
   \log_2(n-1)\) and \(H(P_{\cdot |i}(X; 0)) \in [0,\log_2(n-1)]\). 
\end{proof}

\begin{definition}
   For any $n \geq 1$, dataset $X = \{x_1, \dots, x_n\} \subset \mathbb{R}^{n-1},$ and $C \geq 0$, define ${X_{+C} = \{x'_1, \dots, x'_n\} \subset \mathbb{R}^{n-1}}$ such that for all $i\neq j$ 
   $$\lVert x'_i-x_j' \rVert^2 = \lVert x_i-x_j \rVert^2 + C.$$
\end{definition}

\begin{lemma}\label{lem:X+C_exists}
    Fix any $n \geq 1$. For all $n$-point datasets $X = \{x_1, \dots, x_n\} \subset \mathbb{R}^{n-1}$ and $C \geq 0$, there exists $X_{+C} = \{x'_1, \dots, x'_n\} \subset \mathbb{R}^{n-1}$ such that for all $i\neq j$, $\lVert x'_i-x_j' \rVert^2 = \lVert x_i-x_j \rVert^2 + C$.
\end{lemma}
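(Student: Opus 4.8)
The plan is to recast the statement in terms of Gram matrices and use the classical fact that an $n\times n$ symmetric positive semidefinite matrix of rank $r$ is the Gram matrix of $n$ points in $\mathbb{R}^{r}$ (Schoenberg / classical multidimensional scaling). For $n=1$ there is nothing to prove, so assume $n\ge 2$. First I would center the given points: set $\bar x = \frac1n\sum_i x_i$, $u_i = x_i-\bar x$, and let $G = [\langle u_i,u_j\rangle]_{i,j\in[n]}$ be the associated Gram matrix. Then $G\succeq 0$, $G\vone=\vzero$ (since $\sum_j u_j=\vzero$), $\mathrm{rank}(G)\le n-1$, and $\lVert x_i-x_j\rVert^2 = G_{ii}+G_{jj}-2G_{ij}$ for all $i,j$.

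Next I would exhibit the target Gram matrix explicitly as an additive perturbation of $G$ \emph{along the centering projection}. Let $J := \mI - \frac1n\vone\vone^\top$ and define $G' := G + \tfrac{C}{2}\,J$. A direct computation gives $G'_{ij} = G_{ij} + \tfrac{C}{2}\bigl(\delta_{ij}-\tfrac1n\bigr)$, hence for every $i\neq j$
$$G'_{ii}+G'_{jj}-2G'_{ij} \;=\; \bigl(G_{ii}+G_{jj}-2G_{ij}\bigr) + C \;=\; \lVert x_i-x_j\rVert^2 + C,$$
which is exactly the squared distance we want (and it is $\ge 0$ since $C\ge 0$). It then remains to check that $G'$ is a valid Gram matrix realizable in dimension $n-1$: positive semidefiniteness follows from $G\succeq 0$, $J\succeq 0$ (it is an orthogonal projection), and $C\ge 0$; and the rank bound follows because $\mathrm{range}(G)\subseteq\mathrm{range}(J)$ (as $G\vone=\vzero$) and $\mathrm{range}(J)$ is the $(n-1)$-dimensional orthogonal complement of $\vone$, so $\mathrm{range}(G')\subseteq\mathrm{range}(G)+\mathrm{range}(J)=\mathrm{range}(J)$, giving $\mathrm{rank}(G')\le n-1$.

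Finally, by the spectral theorem write $G' = V^\top V$ with $V\in\mathbb{R}^{(n-1)\times n}$ (padding with zero rows if $\mathrm{rank}(G')<n-1$), and take $x'_i$ to be the $i$-th column of $V$. These lie in $\mathbb{R}^{n-1}$ and satisfy $\lVert x'_i-x'_j\rVert^2 = G'_{ii}+G'_{jj}-2G'_{ij} = \lVert x_i-x_j\rVert^2 + C$, as required; this defines $X_{+C}$. The only point needing care is the rank bookkeeping that keeps the new configuration inside $\mathbb{R}^{n-1}$ — which is precisely why the perturbation must be taken along $J$ rather than along $\mI$ — together with the observation that positive semidefiniteness (hence realizability as real points at all) relies essentially on the hypothesis $C\ge 0$.
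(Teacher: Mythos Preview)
Your proof is correct and, at the level of underlying mathematics, coincides with the paper's: both rely on Schoenberg's characterization of Euclidean distance matrices. The paper verifies Schoenberg's criterion directly on the squared distance matrix $D_{+C}=D+C(\vone\vone^\top-\mI)$, checking that $u^\top D_{+C}u = u^\top D u - C\lVert u\rVert^2 \le 0$ for every $u\perp\vone$. You instead work on the Gram side, explicitly exhibiting $G'=G+\tfrac{C}{2}J$ and checking positive semidefiniteness and rank; note that your $G'$ is exactly $-\tfrac12 J D_{+C} J$, so the two arguments are dual presentations of the same computation. What your route buys is constructiveness (you produce the embedding via $G'=V^\top V$ rather than appealing to an existence theorem) and an explicit reason the dimension stays at $n-1$ (the range inclusion $\mathrm{range}(G')\subseteq\mathrm{range}(J)$); what the paper's route buys is brevity, since the single-line inequality $u^\top D u - C\lVert u\rVert^2\le 0$ dispatches the claim once Schoenberg is cited.
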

\begin{proof}
    Let $D$ be the inter-point squared distance matrix of $X$. Thus, the inter-point squared distance matrix of $X_{+C}$ is $D_{+C} = D + C\cdot (11^T-I_{n})$. By a famous theorem by \citet{schoenberg1938metric}, $X_{+C}$ is isometrically embeddable in $\mathbb{R}^{n-1}$ with respect to $\ell_2$ metric if and only if $\forall u \in \mathbb{R}^n$ with $u^T \vec{1} = 0$, $u^T D_{+C} u \leq 0$ holds. Indeed,
    $$u^T D_{+C} u = u^T D u + C\cdot(u^T\vec{1})(\vec{1}^T u) - C \cdot u^T u = u^T D u - C \cdot \lVert u \rVert^2 \leq 0,$$
    where the final inequality uses the fact that $D$ is embeddable.
\end{proof}

\begin{lemma}\label{lem:tSNE_additive_invariance}
    Fix any $n \geq 2$. For all $n$-point datasets $X \subset \mathbb{R}^{n-1}$, $\rho \in [1, n-1]$, and $C \geq 0$:
    $${\TSNE}_\rho(X) = {\TSNE}_\rho(X_{+C}).$$
\end{lemma}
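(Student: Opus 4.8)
**The plan is to show that the t-SNE objective — and hence its gradient — depends on $X$ only through the input affinity matrix $P(X)$, and that $P(X_{+C}) = P(X)$.**

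First I would establish the reduction. By the definition of $\TSNE_\rho$, a configuration $Y$ lies in $\TSNE_\rho(X)$ iff $\nabla_Y \mathcal{L}_X(Y) = 0$, and $\mathcal{L}_X(Y) = \KL(P(X)\|Q(Y)) = \sum_{i\neq j} P_{ij}(X)\log(P_{ij}(X)/Q_{ij}(Y))$. The gradient in $Y$ touches $X$ only through the coefficients $P_{ij}(X)$ (the $\log P_{ij}(X)$ term contributes a $Y$-independent additive constant to $\mathcal{L}_X$, so it vanishes under $\nabla_Y$). Therefore it suffices to prove $P(X_{+C}) = P(X)$; then $\mathcal{L}_{X_{+C}}(Y)$ and $\mathcal{L}_X(Y)$ differ by a $Y$-independent constant, so they have identical gradients and identical stationary sets. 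Note $X_{+C}$ exists by Lemma \ref{lem:X+C_exists}, so the statement is not vacuous.

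Next I would prove the key claim $P(X_{+C}) = P(X)$ by showing the conditional affinities and the chosen scales are unchanged. Fix a point $i$ and a candidate scale $\sigma$. Writing $d_{ij} := \lVert x_i - x_j\rVert^2$ and $d'_{ij} = d_{ij} + C$, we have
\[
P_{j|i}(X_{+C};\sigma) = \frac{\exp(-(d_{ij}+C)/(2\sigma^2))}{\sum_{k\neq i}\exp(-(d_{ik}+C)/(2\sigma^2))} = \frac{e^{-C/(2\sigma^2)}\exp(-d_{ij}/(2\sigma^2))}{e^{-C/(2\sigma^2)}\sum_{k\neq i}\exp(-d_{ik}/(2\sigma^2))} = P_{j|i}(X;\sigma),
\]
the common factor $e^{-C/(2\sigma^2)}$ cancelling between numerator and denominator; the $\sigma\to 0$ limit case (defining $P_{j|i}(\cdot;0)$) follows by continuity. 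Since $P_{\cdot|i}(X_{+C};\sigma) = P_{\cdot|i}(X;\sigma)$ as distributions for every $\sigma \ge 0$, the entropy $H(P_{\cdot|i}(X_{+C};\sigma)) = H(P_{\cdot|i}(X;\sigma))$ for all $\sigma$, so the perplexity-matching objective $|H(P_{\cdot|i}(\cdot;\sigma)) - \log_2\rho|$ is literally the same function of $\sigma$ in both cases; by Lemma \ref{lem:unique_neighborhood} its unique minimizer $\sigma_i^*$ is identical. Hence $P_{j|i}(X_{+C};\sigma_i^*) = P_{j|i}(X;\sigma_i^*)$ for every $i,j$, and symmetrizing via step (2), $P_{ij}(X_{+C}) = \frac{1}{2n}(P_{i|j}(\sigma_j^*) + P_{j|i}(\sigma_i^*))$ is unchanged. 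Thus $P(X_{+C}) = P(X)$, completing the argument.

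The only mild subtlety — not a real obstacle — is handling the degenerate scale $\sigma_i^* = 0$: the cancellation above is an identity for every $\sigma > 0$, so taking $\sigma\to 0$ preserves the equality of the limiting conditional distributions (this is exactly how $P_{j|i}(\cdot;0)$ is defined in the preliminaries), and the entropy/uniqueness facts from Lemma \ref{lem:unique_neighborhood} already cover the whole range $\sigma \ge 0$. Everything else is a one-line cancellation plus the observation that $\nabla_Y$ kills input-only constants, so no heavy computation is needed.
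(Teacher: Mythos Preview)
Your proposal is correct and follows essentially the same argument as the paper: show that the conditional affinities satisfy $P_{j|i}(X_{+C};\sigma) = P_{j|i}(X;\sigma)$ for every $\sigma$ via cancellation of the common factor $e^{-C/(2\sigma^2)}$, conclude that the perplexity-matched scales $\sigma_i^*$ and hence the symmetrized matrix $P$ coincide, and deduce equality of the stationary sets. The only cosmetic difference is that the paper dispatches the $\sigma_i^* = 0$ case by noting the limiting conditional depends only on the \emph{ordering} of the squared distances (which additive shifts preserve), whereas you invoke continuity of the limit; both are fine.
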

\begin{proof}
    It is sufficient to show that the input affinity matrices for $X$ and $X_{+C}$ are identical. Indeed, for all $i,j \in [n], i\neq j$:
    \begin{align*}
        P_{i|j}(X) &= \frac{\exp\left(-\lVert x_i - x_j \rVert^2_2 / (2\sigma_i^2)\right)}{\sum_{k \neq i} \exp\left(-\lVert x_i - x_k \rVert^2_2 / (2\sigma_i^2)\right)} \\
        &= \frac{\exp\left(-(\lVert x_i - x_j \rVert^2_2+C) / (2\sigma_i^2)\right)}{\sum_{k \neq i} \exp\left(-(\lVert x_i - x_k \rVert^2_2+C) / (2\sigma_i^2)\right)} = P_{i|j}(X_{+C}).
    \end{align*}
\end{proof}

\begin{lemma}\label{lem:tSNE_multiplicative_invariance}
    Fix any $n \geq 2$. For all $n$-point datasets $X \subset \mathbb{R}^{n-1}$, $\rho \in (1, n-1)$, and $C > 0$:

    $${\TSNE}_\rho({X}) = {\TSNE}_\rho(C \cdot X).$$
\end{lemma}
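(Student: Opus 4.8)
The plan mirrors the proof of Lemma~\ref{lem:tSNE_additive_invariance}: it suffices to show that the input affinity matrices of $X$ and $C\cdot X$ coincide, since the t-SNE objective $\mathcal{L}_X(Y) = \KL(P(X)\,\|\,Q(Y))$ — and hence its gradient and its set of stationary points — depends on the input only through $P(X)$, while $Q(Y)$ depends only on $Y$.

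First I would record the elementary rescaling identity. Writing $C\cdot X = \{Cx_1,\dots,Cx_n\}$, for any $i\neq j$ and any $\sigma > 0$ we have $\|Cx_i - Cx_j\|^2/(2(C\sigma)^2) = \|x_i-x_j\|^2/(2\sigma^2)$, so $P_{j|i}(C\cdot X;\, C\sigma) = P_{j|i}(X;\,\sigma)$ term by term. In particular the conditional distributions $P_{\cdot|i}(C\cdot X; C\sigma)$ and $P_{\cdot|i}(X;\sigma)$ are equal as probability vectors and therefore have the same entropy, $H(P_{\cdot|i}(C\cdot X; C\sigma)) = H(P_{\cdot|i}(X;\sigma))$.

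Next I would use this to relate the perplexity-calibrated bandwidths. By Lemma~\ref{lem:unique_neighborhood}, for each $i$ there is a unique minimizer $\sigma_i^*(X)$ of $|H(P_{\cdot|i}(X;\sigma)) - \log_2\rho|$, and likewise a unique $\sigma_i^*(C\cdot X)$ for $C\cdot X$. The identity above shows that the reparametrization $\sigma \mapsto C\sigma$ is a homeomorphism of $[0,\infty)$ fixing $0$ that carries the entropy curve of $X$ onto that of $C\cdot X$ (the $\sigma\to 0$ limits agreeing as well), hence it carries the minimizer to the minimizer: $\sigma_i^*(C\cdot X) = C\,\sigma_i^*(X)$ when $\sigma_i^*(X) > 0$, and $\sigma_i^*(C\cdot X) = 0$ exactly when $\sigma_i^*(X) = 0$. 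Plugging back gives $P_{j|i}(C\cdot X;\sigma_i^*(C\cdot X)) = P_{j|i}(X;\sigma_i^*(X))$ for every $i\neq j$; in the boundary case where both optimal bandwidths are $0$, the conditional affinities are the $\sigma\to 0$ limits, which — exactly as in Lemma~\ref{lem:tSNE_additive_invariance} — are determined solely by the ordering of the squared interpoint distances, and that ordering is unchanged by multiplying all distances by $C$. Symmetrizing over $i,j$ yields $P(X) = P(C\cdot X)$, and the claim follows.

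The only subtlety I would write out carefully is precisely that boundary case $\sigma_i^*=0$: one must confirm the limiting conditional distribution is genuinely scale-invariant (it is, being a function of the argsort of the distances) and that the dichotomy "the optimal bandwidth is $0$ vs.\ strictly positive" is preserved under the reparametrization — which holds because $\sigma\mapsto C\sigma$ fixes $0$ and matches entropy values pointwise. Beyond this bookkeeping there is no real obstacle; the argument is structurally identical to, and slightly simpler than, the additive case.
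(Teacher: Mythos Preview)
Your proposal is correct and follows essentially the same argument as the paper's proof: establish the rescaling identity $P_{j|i}(C\cdot X;\,C\sigma)=P_{j|i}(X;\sigma)$, deduce that the perplexity-calibrated bandwidths satisfy $\sigma_i^*(C\cdot X)=C\,\sigma_i^*(X)$ via Lemma~\ref{lem:unique_neighborhood}, and conclude $P(X)=P(C\cdot X)$. If anything, your treatment of the boundary case $\sigma_i^*=0$ is more careful than the paper's, which glosses over it.
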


\begin{proof}
 First note that for any dataset $X$ and its scaling $C\cdot X$, and all $\sigma_i\geq 0$, we have the following:
    \begin{align*}
        P_{j|i}(C\cdot X; C\cdot \sigma_i) = \frac{\exp(-C^2\cdot\lVert x_i-x_j\rVert^2/(2C^2\cdot\sigma_i^2))}{\sum_{k=1, k\neq j}^n \exp(-C^2\cdot\lVert x_i-x_k\rVert^2/(2C^2\cdot\sigma_i^2))} = P_{j|i}(X; \sigma_i).
    \end{align*}
Let \(H(\cdot)\) denote the entropy function. By the above, \({H(P_{\cdot | i}(X;\sigma_i)) = H(P_{\cdot | i}(C\cdot X;C\cdot \sigma_i))}\). Let $\sigma^*_i$ and correspondingly $\gamma^*_i$ be the (unique, per Lemma \ref{lem:unique_neighborhood}) neighborhood scalings that satisfy the perplexity condition for \(X\) and \(C\cdot X\) respectively (see Section \ref{sec:preliminaries}). Then \(\gamma_i^* = C\cdot \sigma_i^*\).

Therefore \(P_{\cdot |i}(C\cdot X; \gamma_i^*) = P_{\cdot |i}(C\cdot X; C\cdot \sigma_i^*) =P_{\cdot |i}(X; \sigma_i^*) \), yielding the result. 
\end{proof}


The above lemmas give us the following useful corollary that will allow us to prove Lemma \ref{lem:image_of_tSNE}, Theorem \ref{thm:unclustHammer}, Theorem \ref{thm:unclustHammerCH}, and Theorem \ref{thm:unclustHammerDunn}.

\begin{corollary}\label{cor:g_exists}
    Fix any $n \geq 2$, and $X = \{x_1, \dots, x_n\} \subset \mathbb{R}^{n-1}.$ There exists a well-defined, continuous function $g: [0,1] \to \mathbb{R}^{n \times n-1}$ such that:
    $$C \mapsto ((1-C)\cdot X)_{+C},$$
    and for all $\rho \in [1, n-1]$ and $C \in [0,1):$
    $${\TSNE}_\rho(X) = {\TSNE}_\rho(g(C)).$$

\end{corollary}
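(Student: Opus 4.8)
The plan is to pin down an explicit, canonical formula for $g$ using classical multidimensional scaling, after which well-definedness, continuity, and the t-SNE identity all follow from ingredients already in hand (Lemmas \ref{lem:X+C_exists}, \ref{lem:tSNE_additive_invariance}, \ref{lem:tSNE_multiplicative_invariance}).

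\emph{Construction and well-definedness.} Let $D_X$ be the squared-interpoint-distance matrix of $X$, and for $C\in[0,1]$ set $D_C := (1-C)D_X + C(\mathbf{1}\mathbf{1}^\top - I_n)$, which is precisely the squared-distance matrix that $((1-C)\cdot X)_{+C}$ must possess. Since $1-C\ge 0$, the set $(1-C)\cdot X\subset\R^{n-1}$ is a legitimate $n$-point dataset with squared-distance matrix $(1-C)D_X$, and adding the constant $C\ge 0$ to its off-diagonal entries yields $D_C$; hence, by Lemma \ref{lem:X+C_exists}, $D_C$ is a genuine Euclidean squared-distance matrix. To turn this into a \emph{canonical} point configuration, fix once and for all an orthonormal basis $V\in\R^{n\times(n-1)}$ of $\{u\in\R^n:\mathbf{1}^\top u=0\}$, let $J:=I_n-\tfrac1n\mathbf{1}\mathbf{1}^\top$, and define
$$G_C := -\tfrac12\,J\,D_C\,J,\qquad g(C):= G_C^{1/2}\,V\ \in\ \R^{n\times(n-1)},$$
where $G_C^{1/2}$ is the positive-semidefinite square root. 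By classical MDS (Young--Householder / Schoenberg), the validity of $D_C$ as a squared-distance matrix is equivalent to $G_C\succeq 0$ with $G_C\mathbf{1}=0$, and then the $n$ rows of $G_C^{1/2}$ form a centered configuration lying in $\mathbf{1}^\perp\subset\R^n$ whose squared-distance matrix is exactly $D_C$; right-multiplying by $V$ isometrically transports this configuration into $\R^{n-1}$. Thus $g(C)$ is a well-defined representative of $((1-C)\cdot X)_{+C}$ for every $C\in[0,1]$.

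\emph{Continuity.} The map $C\mapsto D_C$ is affine and $D\mapsto -\tfrac12 JDJ$ is linear, so $C\mapsto G_C$ is continuous, with $G_C\succeq 0$ throughout the compact interval $[0,1]$. The only nontrivial input is continuity of the positive-semidefinite square root $A\mapsto A^{1/2}$ on the PSD cone — a standard fact (e.g.\ $\sqrt{\,\cdot\,}$ is a uniform limit of polynomials on any bounded interval containing all the relevant spectra, which are uniformly bounded here). Right-multiplication by the fixed matrix $V$ is linear, so $g=(\,\cdot\,)^{1/2}\circ G_{(\cdot)}$ followed by $V$ is continuous on $[0,1]$.

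\emph{The t-SNE identity and the main obstacle.} Fix $\rho\in(1,n-1)$ and $C\in[0,1)$. Since $1-C>0$, Lemma \ref{lem:tSNE_multiplicative_invariance} gives $\TSNE_\rho(X)=\TSNE_\rho((1-C)\cdot X)$; since $C\ge 0$, Lemma \ref{lem:tSNE_additive_invariance} gives $\TSNE_\rho((1-C)\cdot X)=\TSNE_\rho\big(((1-C)\cdot X)_{+C}\big)=\TSNE_\rho(g(C))$, the last equality because $\TSNE_\rho$ depends only on the squared-distance matrix. Chaining these yields the claim; the endpoint $C=1$ is excluded precisely because multiplicative invariance degenerates at scale factor $0$, though $g(1)$ is still defined and equals a regular unit simplex (which is what the downstream proofs of Theorems \ref{thm:unclustHammer}, \ref{thm:unclustHammerCH}, \ref{thm:unclustHammerDunn} use at the endpoint). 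The substance of the argument is entirely in the first two paragraphs: an embedding realizing $D_C$ is only unique up to rigid motion, so one must commit to a specific continuous selection, and the sole delicate point is continuity of the PSD matrix square root; everything else is bookkeeping on top of the established invariance lemmas.
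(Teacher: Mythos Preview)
Your proof is correct and follows the paper's approach: well-definedness via Lemma \ref{lem:X+C_exists}, then the t-SNE identity via Lemmas \ref{lem:tSNE_additive_invariance} and \ref{lem:tSNE_multiplicative_invariance}. Where the paper asserts continuity informally (``WLOG continuous since it continuously transforms the distances''), you supply an explicit classical-MDS selection $g(C)=(-\tfrac12 JD_CJ)^{1/2}V$ and invoke continuity of the PSD square root, which is the honest way to close the argument since a continuously varying distance matrix does not automatically yield a continuous choice of point configuration.
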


\begin{proof}
    $g$ is well defined by Lemma \ref{lem:X+C_exists} and WLOG continuous since it continuously transforms the distances in $X$:
    $$\forall i,j \in [n], i\neq j, \hspace{1cm} \lVert g(C)_i - g(C)_j \rVert = \sqrt{(1-C)^2 \cdot \lVert x_i - x_j \rVert^2 + C}.$$
    Moreover, by Lemmas \ref{lem:tSNE_additive_invariance} and \ref{lem:tSNE_multiplicative_invariance}, for all $C \in [0,1):$
    $${\TSNE}_\rho(X) = {\TSNE}_\rho(g(C)).$$
\end{proof}

Using the additive and multiplicative invariance of t-SNE, we now prove Lemma \ref{lem:image_of_tSNE}:

%

\begin{proof}[\textbf{Proof of Lemma \ref{lem:image_of_tSNE}}] 
    Fix any $\epsilon>0$. It suffices to show that ${\IMTSNE} \subseteq {\TSNE}_\rho(\Delta_\epsilon).$ Fix any $Y \in {\IMTSNE}$, there exists a $n$-point dataset $X = \{x_1,\ldots,x_n\} \subset \mathbb{R}^{n-1}$ such that:
    $$Y \in {\TSNE}_\rho(X).$$
    Using additive and multiplicative invariance, we will manipulate $X$ such that it is in $\Delta_\epsilon$ which by Lemma \ref{lem:tSNE_additive_invariance} and Lemma \ref{lem:tSNE_multiplicative_invariance} will not change the output. 
    With respect to $X$, define $g:[0,1]\to\R^{n\times n-1}$ as in Corollary \ref{cor:g_exists}. Then for all $C \in [0,1):$
    $${\TSNE}_\rho(X) = {\TSNE}_\rho(g(C)).$$
    By the construction of $g,$ as $C$ approaches $1$, all inter-point distances in $g(C)$ approach $1$. Thus there must exist some $C \in [0,1)$ such that $g(C) \in \Delta_\epsilon$ which completes the proof.
\end{proof}

Using the above lemmas, Theorem \ref{thm:unclustHammer}, Corollary \ref{cor:twoCluster_n_Unclustered}, and Theorem \ref{thm:perturbhammer} are proven.

\UnclustHammer*

\begin{proof}[\textbf{Proof of Theorem \ref{thm:unclustHammer}}]
    Let $g$ be the function from Corollary \ref{cor:g_exists}, and $f(C) = \bar{\mathcal{S}}(g(C); C_{m \in [k]}).$ Note that $f$ is continuous for $C \in [0,1]$ since $g$ is continuous, and $\bar{\mathcal{S}}(\ \cdot\ ; C_{m \in [k]})$ is continuous whenever for all $i\in[n]$, $a(i), b(i) \neq 0$ which follows from $\bar{\mathcal{S}}(X ; C_{m \in [k]})$ being well-defined and the definition of $g$. Therefore, the image of $f$ on $[0,1)$ contains the interval $(f(1), f(0)] = (0, \bar{\mathcal{S}}(X ; C_{m \in [k]})]$ (or if $\bar{\mathcal{S}}(X ; C_{m \in [k]}) \leq 0,$ $[\bar{\mathcal{S}}(X ; C_{m \in [k]}), 0)$). Thus, for all $\epsilon \in (0,1]$, there exists $C \in [0,1)$ such that $X_\epsilon = g(C)$ satisfies the hypothesis.
\end{proof}

Now we can prove Corollary \ref{cor:twoCluster_n_Unclustered}, Corollary \ref{cor:twoCluster_n_UnclusteredCH}, and Corollary \ref{cor:twoCluster_n_UnclusteredDunn} simultaneously: 

\begin{proof}[\textbf{Proof of Corollaries \ref{cor:twoCluster_n_Unclustered}, \ref{cor:twoCluster_n_UnclusteredCH}, and \ref{cor:twoCluster_n_UnclusteredDunn}}]
    The proof proceeds by showing a dataset and its output who have an average silhouette score of 1, Calinski-Harabasz index of $\infty$, and Dunn index of $\infty$, and then applies Theorem \ref{thm:unclustHammer}, Theorem \ref{thm:unclustHammerCH}, and Theorem \ref{thm:unclustHammerDunn} respectively. WLOG fix partition $C_1 \sqcup C_2 = [n]$ with $C_1=[1,n/2]$ and $C_2 = [n/2+1,n]$. Consider the $n$-point dataset, $X=\{x_1, \dots, x_n\} \subset \mathbb{R}^{n-1}$, such that for all $i \in C_1$, $x_i = \vec{0}$, and for all $i \in C_2$, $x_i = \vec{e_1}$.
    
    Routine calculations show that the conditional input affinities are:

    $$P_{i|j} = \begin{cases} 
          \frac{1}{\frac{n}{2}-1+\frac{n}{2}\exp\left(-\frac{1}{2\sigma^2_j} \right)} & i \in C(j), i \neq j \\
          \frac{\exp\left(-\frac{1}{2\sigma^2_j} \right)}{\frac{n}{2}-1+\frac{n}{2}\exp\left(-\frac{1}{2\sigma^2_j} \right)} & i \not\in C(j)\\
          0 & i = j.
       \end{cases}$$

    By symmetry, $\sigma_j = \sigma_i$ for all $i,j \in [n].$ Hence, let $\sigma$ be the neighborhood size for all $j \in [n]$ which is non-zero and well defined for $\rho \in [1, n-1].$ Thus the symmetrized input affinities are:

    $$P_{ij} = \begin{cases} 
          \frac{1}{\frac{n^2}{2}-n+\frac{n^2}{2}\exp\left(-\frac{1}{2\sigma^2} \right)} & i \in C(j), i\neq j \\
          \frac{\exp\left(-\frac{1}{2\sigma^2}\right)}{\frac{n^2}{2}-n+\frac{n^2}{2}\exp\left(-\frac{1}{2\sigma^2} \right)} & i \in C_1, j \in C_2 \\
          0 & i = j.
       \end{cases}$$

    Any set $Y = \{ y_1, \dots, y_n\} \subset \mathbb{R}$ is a global minimizer if $P_{ij} = Q_{ij}$ for all $i,j \in [n]$. In this case, this is achieved if $y_{i \in C_1} = 0$ and $y_{i \in C_2} = \sqrt{\exp\left( \frac{1}{2\sigma^2}\right) - 1}.$ Furthermore, since $Y$ can be isometrically embedded in $\mathbb{R}^d$ for all $d \geq 1$, this result holds for t-SNE embeddings of all dimensions. 

    To finish the proof note that for all $i \in [n]$ $a(i)=0$ when defined with respect to $Y$ and partition $C_1 \sqcup C_2.$
\end{proof}

\Perturbhammer*
\begin{proof}[\textbf{Proof of Theorem \ref{thm:perturbhammer}}]
    The proof is immediate by application of Lemma \ref{lem:image_of_tSNE}.
\end{proof}

\PoisonPoint*
\begin{proof}[\textbf{Proof of Theorem \ref{thm:poison_point}}]
Let \(J_n := \vec{1}_n \vec{1}_n^T\). To show the existence of \(X_0\) and \(X_1\), it suffices to write down their squared Euclidean interpoint distance matrices. Denote these \(D^2(X_0)\) and \(D^2(X_1)\) respectively. Define
   \[D^2(X_0) := J_n - I_n.\]
   This is a squared Euclidean distance matrix because it can be realized by a regular simplex (all points are equidistant). Now define
   \[D^2(X_1) := \begin{bmatrix}
       J_{n/2} - I_{n/2} & (1+r)  J_{n/2}   \\
       (1+r) J_{n/2}  &  J_{n/2} - I_{n/2}  
   \end{bmatrix}.\]
   This is also a squared Euclidean distance matrix because one can start with an embedding in \(\mathbb{R}\) where \(x_1 = \dots = x_{n/2} = 0\) and \(x_{n/2+1} = \dots = x_{n} = \sqrt{r}\), which has squared Euclidean distance matrix profile of
   \[\tilde D^2 = \begin{bmatrix}
       \vec{0}_{n/2}\vec{0}_{n/2}^T &  r  J_{n/2} \\
       r  J_{n/2}  &   \vec{0}_{n/2}\vec{0}_{n/2}^T 
   \end{bmatrix},\] 
   and then, recalling that the set of squared Euclidean distance matrices form a cone, one can argue that \(D^2(X_1) = \tilde D^2 + D^2(X_0)\) is also squared Euclidean.

   For any partition of \([n]\), it is clear that the silhouette score of \(X_0\) with respect to that partition is zero since all points are equidistant. As for \(X_1\), take \(r=2\) and observe that for \(C_1 = \{1,\dots, n/2\}\) and \(C_2 = \{n/2+1,\dots, n\}\), we have \(\bar{S}(X_1; C_1, C_2) = \frac{\sqrt{3} - 1}{\sqrt{3}} \approx 0.422.\)

Let \(X_0\) and \(X_1\) be Euclidean vector realizations of \(D^2(X_0)\) and \(D^2(X_1)\) each centered at the origin. Let \(x_0\) be the origin, and hence the mean of both datasets. Note that \(x_0\) will be equidistant to all the points in \(X_0\), and likewise with \(X_1\). 

To compute the distance of the mean \(x_0\) to other points, we make use of the fact that for all \(i\in [n]\)
\[\|x_0 - x_i\|^2 = \frac{1}{n} \sum_{j=1}^n \|x_0 - x_j\|^2 = \frac{1}{2}\cdot \frac{1}{n^2}\sum_{j=1}^n \sum_{k=1}^n \|x_k - x_j\|^2.\]
The resulting Euclidean distance matrices are as follows (where the distances corresponding to the appended point are written in the last row and column).
 \[D^2(X_0\cup x_0) = \begin{bmatrix}
     J_{n} - I_n & (\frac{1}{2} - \frac{1}{2n})  \vec{1}_n \\
   (\frac{1}{2} - \frac{1}{2n})  \vec{1}_n^T  & 0
 \end{bmatrix},\]
 \[D^2(X_1\cup x_0) = \begin{bmatrix}
       J_{n/2} - I_{n/2 } & (1+r) J_{n/2}  & (\frac{1}{2} + \frac{r}{4 } - \frac{1}{2n}) \vec{1}_{n/2} \\
       (1+r) J_{n/2} &  J_{n/2} - I_{n/2}   & (\frac{1}{2} + \frac{r}{4 } - \frac{1}{2n}) \vec{1}_{n/2}  \\
       (\frac{1}{2} + \frac{r}{4 } - \frac{1}{2n}) \vec{1}_{n/2}^T & (\frac{1}{2} + \frac{r}{4 } - \frac{1}{2n}) \vec{1}_{n/2}^T & 0  
   \end{bmatrix}.\]
The key observation is that in both cases (recall the choice of \(r=2\)), for all \(i,j\in [n]\), \(\|x_i - x_0\|^2 < \|x_i-x_j\|^2\). 
By Lemma \ref{lem:rho_equal_one}, this implies that for \(\rho = 1\), \(P_{i|j}(X_0) = P_{i|j}(X_1) = 0\) for all \(i,j \in [n]\). In turn, \(P_{ij}(X_0) = P_{ij}(X_1) = 0\). Meanwhile \(P_{i|0}(X_0) = P_{i|0}(X_1) = 1/n\) (since \(x_0\) is equidistant from all other points) and \(P_{0|i}(X_0) = P_{0|i}(X_0) = 1\) for all \(i\in [n]\) (since \(x_0\) is every point's nearest neighbor). Thus both cases yield the same \(P\) matrix:
\[P(X_0\cup x_0) = P(X_1\cup x_0) =  \frac{1}{2(n+1)}\begin{bmatrix}
  \vec{0}_{n}\vec{0}_{n}^T & (1 + \frac{1}{n})\vec{1}_{n} \\
 (1 + \frac{1}{n})\vec{1}_{n}^T  & 0
\end{bmatrix}.\]
The equivalence of their loss landscapes follows immediately.
\end{proof}

\begin{lemma}\label{lem:rho_equal_one}  For \(\rho=1\) and any \(X = \{x_1,\dots, x_n\}\subset \mathbb{R}^D\),
\[P_{i|j}(X; \sigma^*_j) = \frac{1}{|N(j)|}\cdot \textbf{1}[i \in N(j)] \hspace{0.5cm}\text{where} \hspace{0.5cm} N(j) := \argmin_{k\in [n]\setminus\{j\}} \|x_k - x_j\|\]
\end{lemma}
\begin{proof}
    Note that \(\rho = 1\) corresponds to \(H(P_{\cdot|j}) = 0\), so we optimize the neighborhood size \(\sigma_j\) to minimize the entropy of the conditional affinities. This is done by taking \(\sigma_j = 0\). Recall
\[P_{i|j}(X; 0) = \lim_{\sigma_j \to 0} \frac{\exp(-\|x_i-x_j\|^2/2\sigma_j^2)}{\sum_{k=1, k\neq j}^n \exp(-\|x_k - x_j\|^2/2\sigma_j^2)} \]
    In the limit, only those \(x_k\) which are closest to \(x_j\) will dominate, i.e.\ \(x_k\) where \(k\in N(j)\). 
\end{proof}

\subsection{Impostor dataset construction}
\label{app:impostor_construction}

The construction of an impostor dataset based on an input dataset is done as follows.

\begin{algorithm}[H]
\caption{Impostor Dataset Creation}
\label{alg:impostor}
\begin{algorithmic}[1]
\Require Dataset $X = \{x_1, \ldots, x_n\}$ with at least two distinct points, and tolerance $\epsilon > 0$
\State Construct squared interpoint distance matrix of $X$, denote it by $D$
\State Form $D' \gets \frac{\epsilon}{\max_{i,j} D_{ij}} \cdot D + (11^\top - I_n)$
\State Run classical multidimensional scaling on $D'$ to obtain its Euclidean embedding $$X_\epsilon = \{x_1', \ldots, x_n'\} \subset \mathbb{R}^{n-1}.$$
\State \Return $X_\epsilon$
\end{algorithmic}
\end{algorithm}


\section{Appendix: Misrepresentation of Outliers}\label{sec:appendix_outliers}

\subsection{Additional Experiments}\label{sec:appendix_outliers_experiment}
We provide a comparison of t-SNE and PCA on the BBC news dataset. For ease of presentation, we take a three-cluster, \((n = 1204)\)-size subset (business, sports, tech) and we analyze what happens under injection of $120$ poison points versus $120$ far outliers. 

In both Figure \ref{fig:outliers_real_world} and Figure \ref{fig:bbc_appendix}, poison points are picked as follows: first we run a $k$-means algorithm on the original dataset; then, for each poison point, we pick one of the these means and $10$ random points of the dataset, and we average these two quantities (the idea is to connect the points in a way that contradicts the ground-truth three-clustering). We found \(k=2\) worked well. We pick outlier points as normal vectors centered at the mean of the dataset with variance \(32\) (the diameter of the original dataset is roughly \(1.5\)).
\begin{figure}[h!]
    \centering
    \includegraphics[width=\linewidth]{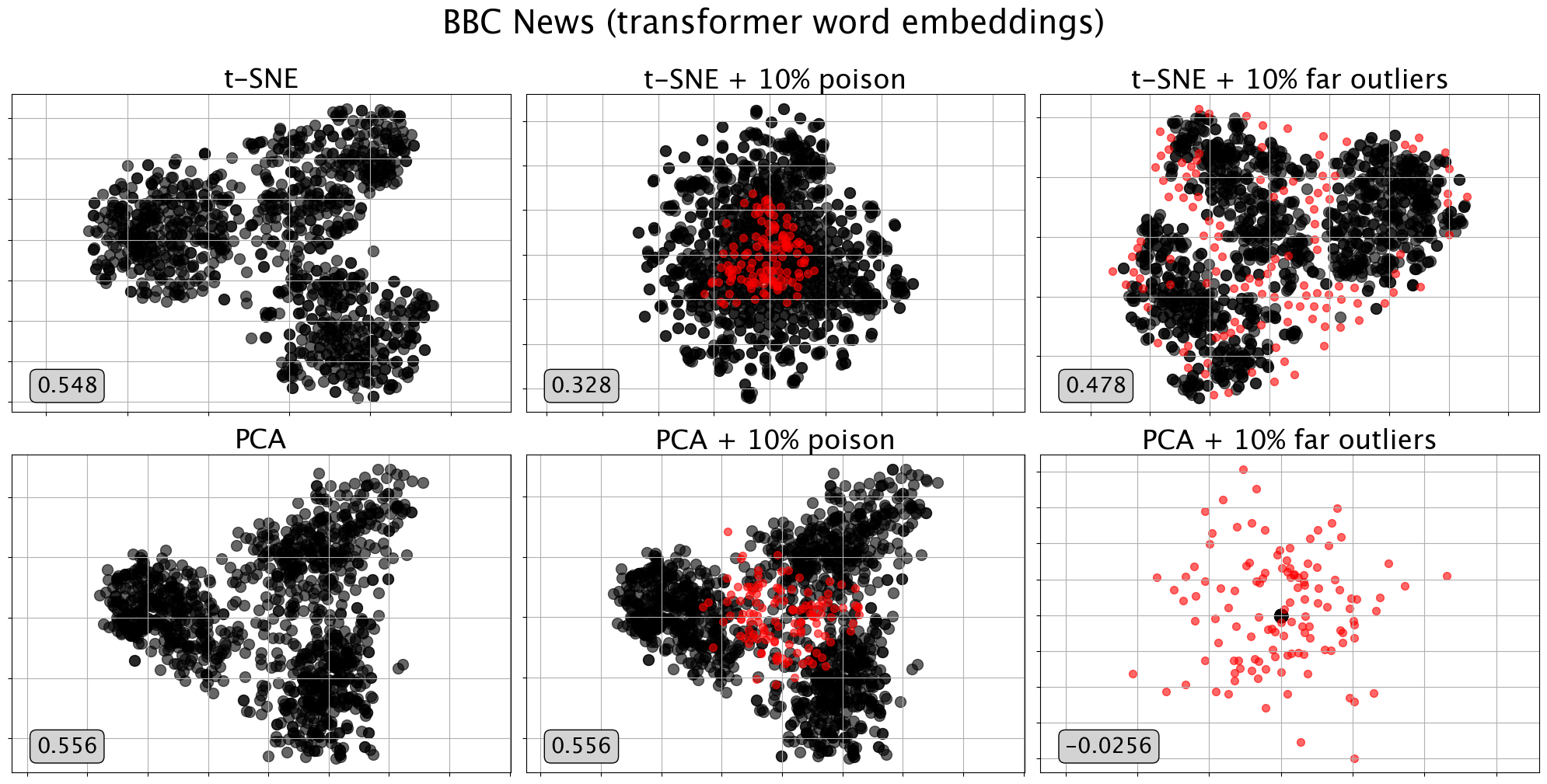}
    \caption{\small t-SNE vs.\ PCA on poison points versus outlier points on a three-cluster subset of the BBC news dataset. The label on the bottom left is silhouette score of the plot (sans injected points) with respect to their ground-truth labels.}
    \label{fig:bbc_appendix}
\end{figure}

Below are additional experiments demonstrating t-SNE and PCA visualizations have divergent responses for single  poison point injection even as the size of the dataset increases.

\subsection{Empirical Comparison with UMAP}\label{app:umap2}

UMAP exhibits similar behavior to t-SNE when it comes to depicting far-away outliers. The main difference seems to be that UMAP tends to incorporate outliers deep into clusters, while t-SNE keeps them on the edge of clusters. As seen in Figure \ref{fig:outliers_real_world_UMAP} (right), this behavior can compromise the quality of clustering.

\begin{figure}[h!]
    \centering
\includegraphics[width=\linewidth]{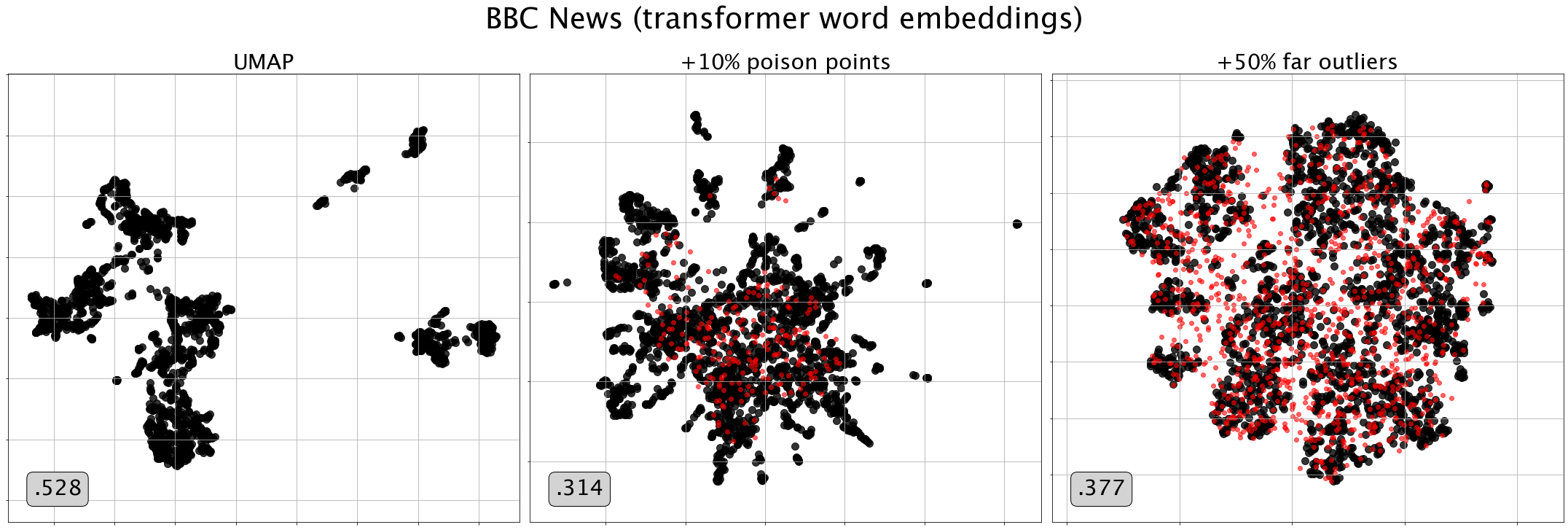}
    \caption{\small UMAP in response to poison points versus \(\alpha\)-outliers on the BBC News dataset. The number in the bottom left of each panel is silhouette score with respect to the ground-truth. Compare with Figure \ref{fig:outliers_real_world}.}
\label{fig:outliers_real_world_UMAP}
\end{figure}

\begin{figure}[h!]
    \centering
    \includegraphics[width=\linewidth]{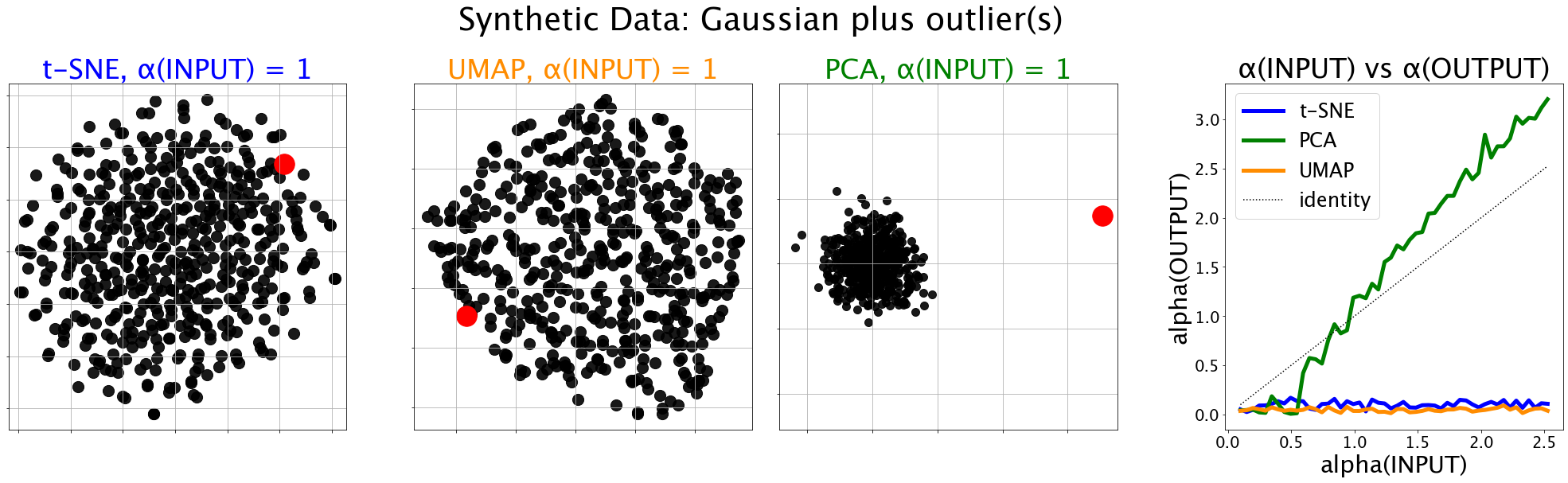}
    \includegraphics[width=\linewidth]{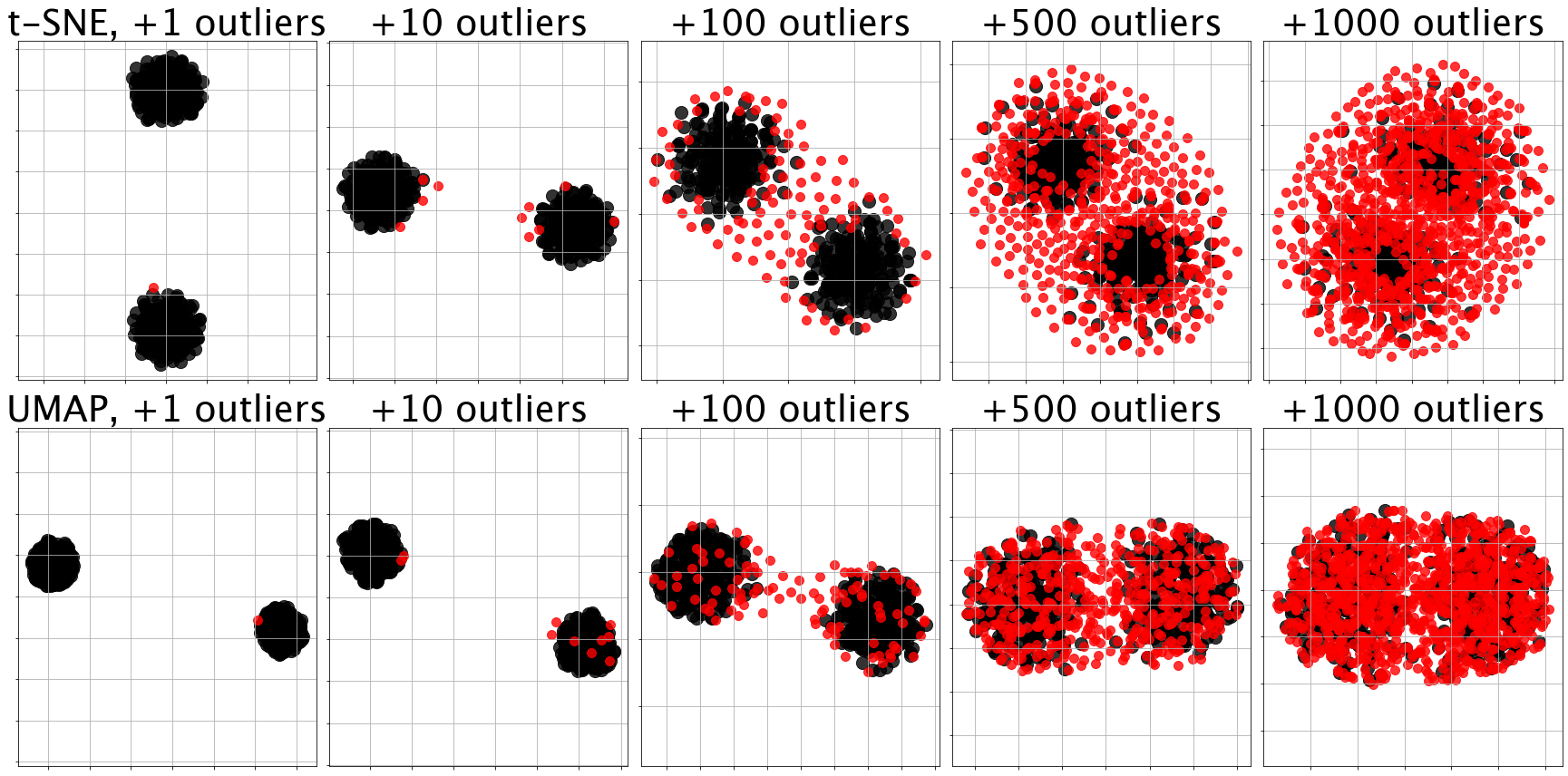}
    \caption{\small UMAP, like t-SNE, consistently suppresses faraway outliers. One salient difference between t-SNE and UMAP is that UMAP is more inclined to place outliers within clusters, whereas t-SNE often places them at the edge. Compare with Figure \ref{fig:outliers1}.}
    \label{fig:outliers_synthetic_UMAP}
\end{figure}

\newpage

\subsection{Omitted Proofs}

\begin{lemma}\label{lem:Qsum_bound} Fix $n\geq 2$ and \(Y = \{y_0,\ldots,y_{n-1}\} \subset \mathbb{R}^d\). Let ${\beta := \diam(Y\setminus \{y_0\})}$ and ${\gamma := \min_{j\in [n]} \|y_0-y_j\|}$. Then
\begin{align*}
    \sum_{i=1}^n Q_{0i} &\leq \frac{1}{2 + (n-2)\cdot \frac{1+\gamma^2}{1+\beta^2}} . 
\end{align*}
\end{lemma}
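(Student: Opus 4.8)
The plan is to unfold $\sum_{i} Q_{0i}$ (the sum of the off-diagonal entries in the row of $Q$ attached to $y_0$) straight from the definition of the output affinities and then split the normalizing constant into the contribution of pairs that touch $y_0$ and those that do not. Write $Z := \sum_{k\neq l}(1+\|y_k-y_l\|^2)^{-1}$ for the normalizer appearing in the definition of $Q$, and set $N_0 := \sum_{j\neq 0}(1+\|y_0-y_j\|^2)^{-1}$. Grouping the ordered pairs $(k,l)$ by whether $0\in\{k,l\}$ gives $Z = 2N_0 + Z'$, where $Z' := \sum_{k\neq l,\, k,l\neq 0}(1+\|y_k-y_l\|^2)^{-1}$. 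Consequently
\[
\sum_{i\neq 0} Q_{0i} \;=\; \frac{N_0}{Z} \;=\; \frac{N_0}{2N_0 + Z'} \;=\; \frac{1}{2 + Z'/N_0},
\]
so, since $t\mapsto 1/(2+t)$ is decreasing on $t\geq 0$, it suffices to lower-bound the ratio $Z'/N_0$.

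Next I would bound the numerator and denominator of that ratio by the relevant extreme distances. By definition of $\gamma$, every $j\neq 0$ has $\|y_0-y_j\|\geq \gamma$, so $(1+\|y_0-y_j\|^2)^{-1}\leq (1+\gamma^2)^{-1}$ and hence $N_0 \leq (n-1)/(1+\gamma^2)$. By definition of $\beta = \diam(Y\setminus\{y_0\})$, every ordered pair $(k,l)$ with $k,l\neq 0$ and $k\neq l$ has $\|y_k-y_l\|\leq\beta$, so $(1+\|y_k-y_l\|^2)^{-1}\geq (1+\beta^2)^{-1}$; there are $(n-1)(n-2)$ such pairs, whence $Z'\geq (n-1)(n-2)/(1+\beta^2)$. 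Dividing, $Z'/N_0 \geq (n-2)(1+\gamma^2)/(1+\beta^2)$, and substituting into the displayed identity gives exactly the claimed bound. (For $n=2$ this reads $\sum_{i\neq 0}Q_{0i}\leq 1/2$, which is tight since then $Q_{01}=1/2$; note $N_0>0$ since $n\geq 2$, and $1+\gamma^2,1+\beta^2\geq 1$, so no quantity blows up.)

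There is no genuine analytic difficulty here; the only things to get right are the bookkeeping in the split $Z=2N_0+Z'$ (ordered versus unordered pairs, and that the row of $y_0$ is counted twice), and the direction of monotonicity when passing from the lower bound on $Z'/N_0$ to the upper bound on $1/(2+Z'/N_0)$.
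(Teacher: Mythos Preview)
Your proof is correct and follows essentially the same approach as the paper: both split the normalizer as $2N_0+Z'$ (the paper writes $Z_0$ and $Z_{1:n}$), reduce to lower-bounding the ratio $Z'/N_0$, and then bound numerator and denominator termwise using $\beta$ and $\gamma$ to obtain $(n-2)(1+\gamma^2)/(1+\beta^2)$.
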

\begin{proof}
    Let \(Z_0 = \sum_{i=1}^{n-1} \frac{1}{1+\|y_i-y_0\|^2}\) and \(Z_{1:n-1} = \sum_{i, j \in [n-1]: i\neq j}  \frac{1}{1+\|y_i-y_j\|^2}\). Then
    \[\sum_{i=1}^{n-1} Q_{0i} = \frac{Z_0}{2Z_0 + Z_{1:n-1}} = \frac{1}{2 + Z_{1:n-1}/Z_0}.\]
Now observe that
\begin{align*}
    \frac{Z_{1:n-1}}{Z_0} &= \frac{\sum_{i,j\in [n-1]:i\neq j} (1 + \|y_i-y_j\|^2)^{-1}}{\sum_{j=1}^{n-1} (1 + \|y_0 -y_j\|^2)^{-1}}  \\
 &\geq \frac{(n-1)(n-2)(1+\max_{i,j \in[n]} \|y_i-y_j\|^2)^{-1}}{(n-1)(1+\min_{j \in[n]} \|y_0-y_j\|^2)^{-1}}  \\
  &= \frac{(n-2)(1+\gamma^2)}{1+\beta^2}.  
\end{align*}
Plugging this back into the previous equation gives the statement. 
\end{proof}


\begin{lemma}\label{lem:r_split}
    Fix \(n\geq 2\) and \(Y = \{y_0,y_1,...,y_{n-1}\} \subset \mathbb{R}^d\). If \(Y\) is a \((\alpha, y_0)\)-outlier configuration such that $\alpha = \alpha(Y)$, then there exists $v \in \mathbb{R}^d$ such that for all $i \in [n]$:
    \[\|y_i-y_0\| \cdot \frac{ \alpha}{{1+\alpha}
    } \leq (y_i - y_0)\cdot v \leq \|y_i-y_0\|.\]
\end{lemma}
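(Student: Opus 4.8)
The plan is to take for $v$ the unit vector pointing from the outlier $y_0$ to the nearest point of the convex hull of the remaining points, and to show that the maximality of the margin forces the ``sideways'' spread of the bulk, as seen from $y_0$, to be small. Translating so that $y_0 = 0$ (the statement only involves the translation-invariant vectors $y_i - y_0$), write $M := \max\{1,\diam(Y\setminus\{y_0\})\}$, $K := \conv\{y_1,\dots,y_{n-1}\}$, and let $w$ be the point of $K$ closest to the origin. If $\alpha := \alpha(Y) = 0$ the claim holds trivially with $v = 0$, so assume $\alpha > 0$; I will show $w \neq 0$ and take $v := w/\|w\|$.

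I would first record two elementary facts. \textbf{(a)} From the $(\alpha,y_0)$-outlier property there is a unit vector $\nu$ along which the gap between $y_0$ and $\{y_i\}_{i\ge1}$ is at least $\alpha M$, i.e.\ $y_i\cdot\nu \ge \alpha M$ for all $i\ge1$; taking convex combinations, $z\cdot\nu\ge\alpha M$ for every $z\in K$, so in particular $0\notin K$, $w\neq0$, and $\|w\| \ge w\cdot\nu \ge \alpha M$. \textbf{(b)} The variational inequality characterising the projection, $\langle w, z-w\rangle\ge0$ for all $z\in K$, applied to $z=y_i\in K$ gives $y_i\cdot w\ge\|w\|^2$, hence $y_i\cdot v \ge \|w\| \ge \alpha M$ for every $i\ge1$. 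Fact (b) already yields the upper bound, $y_i\cdot v\le\|y_i\|\,\|v\|=\|y_i\|$ by Cauchy--Schwarz, so it remains only to control the component of $y_i$ orthogonal to $v$.

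The crux, and the step I expect to be the only non-routine one, is the claim that $\|\pi_v(y_i)\|\le M$ for all $i\ge1$, where $\pi_v$ denotes orthogonal projection onto $v^\perp$. Write $w=\sum_{k}\lambda_k y_k$ as a convex combination (possible since $w\in K$). Since $w=\|w\|\,v$ lies on the axis $\mathbb{R}v$, we have $\sum_k\lambda_k\pi_v(y_k)=\pi_v(w)=0$; therefore, for any $i\ge1$, $\pi_v(y_i)=\sum_k\lambda_k\big(\pi_v(y_i)-\pi_v(y_k)\big)=\sum_k\lambda_k\pi_v(y_i-y_k)$, and since $\pi_v$ is $1$-Lipschitz, $\|\pi_v(y_i)\|\le\sum_k\lambda_k\|y_i-y_k\|\le\diam(Y\setminus\{y_0\})\le M$. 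The point of choosing $v$ this way is precisely this cancellation: a generic separating direction could leave the bulk spread arbitrarily far transverse to it, but pointing $v$ at the closest hull point balances the support of that hull about $\mathbb{R}v$, which is what $\pi_v(w)=0$ records.

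Finally I would assemble the estimate. Fix $i\ge1$ and set $p:=y_i\cdot v\ (\ge\alpha M)$ and $q:=\|\pi_v(y_i)\|\ (\le M)$, so $p\ge\alpha q$ and hence $\|y_i\|^2=p^2+q^2\le p^2(1+1/\alpha^2)=p^2(1+\alpha^2)/\alpha^2$; rearranging, $p\ge\frac{\alpha}{\sqrt{1+\alpha^2}}\|y_i\|$. Combined with the Cauchy--Schwarz upper bound from (b), the trivial case $i=0$ (both sides vanish), and undoing the translation $y_0\to0$, this is exactly the asserted two-sided bound.
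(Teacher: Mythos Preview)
Your proof is correct and follows essentially the same approach as the paper: both translate $y_0$ to the origin, take $v$ to be the unit vector toward the nearest point $w$ of $\conv(Y\setminus\{y_0\})$, invoke the projection variational inequality to get $y_i\cdot v\ge\|w\|\ge\alpha M$, and bound the transverse component of $y_i$ by the diameter $\beta\le M$ to control the cosine. Your convex-combination argument for $\|\pi_v(y_i)\|\le M$ is just a repackaging of the paper's observation that $\|y_i-w\|\le\beta$ (since $y_i,w\in K$ and $\diam K=\beta$), and your $(p,q)$ decomposition is algebraically identical to their cosine estimate.
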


\begin{proof}
    Fix $i \in [n]$, let ${\beta := \diam(Y\setminus \{y_0\})}$, and WLOG let $y_0 = 0.$ Let $v$ be the unit vector defining the hyperplane as in Definition \ref{def:alpha_outlier}. Then by Cauchy-Schwarz, $(y_i-y_0) \cdot v \leq \| y_i-y_0 \|.$ To prove the other side of the inequality, we only need to lower bound the cosine of the angle between $y_i-y_0$ and $v$:
    $$(y_i - y_0)\cdot v = \| y_i - y_0 \| \cdot \cos(\angle({v, y_i})).$$
    Since $v$ is the maximum-margin hyperplane between $y_0=0$ and $Y\setminus \{y_0\}$, it holds that $u = v \cdot (\alpha \max\{1, \beta\})$ is in the convex hull of $Y\setminus \{y_0\}.$ Indeed, $\| u \| = \inf_{y \in \conv(Y\setminus \{y_0\})} \| y \|.$ Thus, we know that the closed ball $\overline{B_\beta (u)}$ contains $\conv(Y\setminus\{y_0\}).$ Therefore, there exists $t \in \mathbb{R}^d$ such that $\| t \| \leq \beta, u + t = y_i,$ and $u\cdot t \geq 0$. Hence
    $$\cos(\angle(v, y_i)) = \frac{v \cdot y_i}{\lVert y_i \rVert} = \frac{v\cdot(u+t)}{{\|u+t\|}} { \geq \frac{v\cdot u}{\|u\| + \|t\|}} \geq \frac{\alpha \max\{1,\beta\}}{{\alpha\max\{1,\beta\} + \max\{1, \beta\}}} \geq \frac{\alpha}{{1+\alpha}
    },$$
    
   completing the proof.
\end{proof}

\OutlierAbs*
\begin{proof}
Fix $Y = \{y_0, y_1,\dots, y_{n-1}\} \in \IMTSNE$ and define \(\gamma = \min_i \|y_i-y_0\|\). WLOG, let \(y_0\) be the outlier point and assume $\gamma > 0$, otherwise the hypothesis goes through trivially.
    Since \(Y\) is stationary, \(\frac{\partial \mathcal{L}}{\partial y_0} = 0\). Pick \(v\) as in Lemma \ref{lem:r_split} and observe:
    \begin{align*}
        0 = \frac{\partial \mathcal{L}}{\partial y_0}\cdot v &= \sum_{i=1}^{n-1} \frac{(P_{i0} - Q_{i0})(y_0-y_i)\cdot v}{1 + \|y_0-y_i\|^2} \\
        &\leq -\frac{\alpha}{1+\alpha}\sum_{i=1}^{n-1}  P_{i0} \frac{\|y_0-y_i\|}{1 + \|y_0-y_i\|^2} + \sum_{i=1}^{n-1} Q_{i0} \frac{\|y_0-y_i\|}{1 + \|y_0-y_i\|^2}\\
        &\leq -\frac{\alpha}{1 + \alpha}  \frac{\gamma}{1+(\gamma+\beta)^2}\sum_{i=1}^{n-1} P_{i0}
        + \frac{\gamma + \beta}{1 + \gamma^2}\sum_{i=1}^{n-1} Q_{i0} \\
        &\leq  -\frac{\alpha}{1+\alpha}\frac{\gamma}{1+(\gamma+\beta)^2}\frac{1 + \sum_{i=1}^{n-1} P_{0|i}}{2n}
        + \frac{\gamma + \beta}{1 + \gamma^2}\frac{1}{2 + (n-2)\frac{1+\gamma^2}{1 + \beta^2}}
    \end{align*}
where, in the fourth line, we use Lemma \ref{lem:Qsum_bound} and the fact that \(\sum_{i=1}^n P_{i|0} = 1\). Multiplying by $\frac{1+\gamma^2}{\gamma+\beta}\cdot \frac{2n}{1 + \sum_{i=1}^{n-1} P_{0|i}} > 0$ and rearranging, we get that:
    

\begin{align*}
    \frac{\alpha}{{1+\alpha}} \cdot \frac{1+\gamma^2}{\gamma+\beta} \cdot \frac{\gamma}{1+(\gamma+\beta)^2} 
    &\leq \frac{1}{2 + (n-2)\cdot \frac{1+\gamma^2}{1+\beta^2}}\cdot \frac{2n}{1 + \sum_{i=1}^{n-1} P_{0|i}} \\
    &\leq \frac{1+\beta^2}{(n-2)(1+\gamma^2)}\cdot \frac{2n}{1 + \sum_{i=1}^{n-1} P_{0|i}} \\
    &= \frac{1+\beta^2}{1+\gamma^2}\cdot \Big(1 + \frac{2}{n-2}\Big)\cdot \frac{2}{1+\sum_{i=1}^{n-1} P_{0|i}}.
\end{align*}
Thus, by definition of \(\alpha\)-outlier configuration, $\gamma \geq \alpha \cdot \max\{\beta,1\}$:
\begin{align*}
    \Big(1 + \frac{2}{n-2}\Big)\cdot \frac{2}{1+\sum_{i=1}^{n-1} P_{0|i}} &\geq \frac{\alpha}{{1+\alpha}} \cdot \frac{\gamma}{\gamma+\beta} \cdot \frac{1+\gamma^2}{1+(\gamma+\beta)^2} \cdot \frac{1+\gamma^2}{1+\beta^2} \\ 
    &\geq \frac{\alpha}{{1+\alpha}}\frac{\gamma^3}{(\gamma+\beta)^3}\frac{1 + \gamma^2}{1+\beta^2} \\ 
    &\geq \frac{\alpha}{{1+\alpha}}\frac{\alpha^3\max\{\beta,1\}^3}{(\alpha\max\{\beta,1\}+\beta)^3}\frac{1 + \alpha^2\max\{\beta,1\}^2}{1+\beta^2} \\
    &\geq \frac{\alpha}{{1+\alpha}}\frac{\alpha^3}{(\alpha+\frac{\beta}{\max\{\beta,1\}})^3}\frac{1 + \alpha^2}{2} \\
    &\geq \frac{\alpha}{{1+\alpha}}\frac{\alpha^3}{(1+\alpha)^3}\frac{1 + \alpha^2}{2}\\
    &= {\frac{\alpha^4(1+\alpha^2)}{2(1+\alpha)^4}}. 
\end{align*}

Assume $\alpha \geq 3.26$ (or else the hypothesis holds trivially), then the above is lower-bounded by 
$0.1876 \alpha^2$.
Solving for $\alpha$ and observing that each $P_{0|i}$ is non-negative yields that:
\begin{align*}
    \alpha \leq \sqrt{\frac{1}{0.1876}\cdot\Big(1 + \frac{2}{n-2}\Big)\cdot \Big( \frac{{2}}{1+\sum_{i=1}^{n-1} P_{0|i}}\Big) } \leq \sqrt{\frac{2}{0.1876}\cdot\Big(1 + \frac{2}{n-2}\Big) } = 3.266 + o_n(1).
\end{align*}
This concludes the proof.
\end{proof}

\end{document}